\newtheorem{theorem}{Theorem}
\newtheorem{lemma}{Lemma}
\begin{document}

\title{FSMP: A Frontier-Sampling-Mixed Planner for Fast Autonomous Exploration of Complex and Large \\ 
3-D Environments}

\author{Shiyong~Zhang, 
        Xuebo~Zhang$^{\star}$,
        Qianli~Dong,
        Ziyu~Wang, 
        Haobo~Xi, and~
        Jing~Yuan        
       
\thanks{This work was supported in part by Natural Science Foundation of China under Grant Number 62303249 and 62293513/62293510, in part by the China Postdoctoral Science Foundation-Tianjin Joint Support Program under Grant Number 2023T013TJ. (\emph{Corresponding author: Xuebo Zhang.})}
\thanks{The authors are with the Institute of Robotics and Automatic Information System, College of Artificial Intelligence, Nankai University, Tianjin, China. (e-mail: syzhang@mail.nankai.edu.cn).}
}

\markboth{}
{}

\maketitle

\begin{abstract}
In this paper, we propose a systematic framework for fast exploration of complex and large 3-D environments using micro aerial vehicles (MAVs). The key insight is the organic integration of the frontier-based and sampling-based strategies that can achieve rapid global exploration of the environment. Specifically, a field-of-view-based (FOV) frontier detector with the guarantee of completeness and soundness is devised for identifying 3-D map frontiers. Different from random sampling-based methods, the deterministic sampling technique is employed to build and maintain an incremental road map based on the recorded sensor FOVs and newly detected frontiers. With the resulting road map, we propose a two-stage path planner. First, it quickly computes the global optimal exploration path on the road map using the lazy evaluation strategy. Then, the best exploration path is smoothed for further improving the exploration efficiency. We validate the proposed method both in simulation and real-world experiments. The comparative results demonstrate the promising performance of our planner in terms of exploration efficiency, computational time, and explored volume.
\end{abstract}

\begin{IEEEkeywords}
Micro aerial vehicle, autonomous exploration, environmental mapping, frontier detection, sampling-based algorithm.
\end{IEEEkeywords}
\IEEEpeerreviewmaketitle

\section{Introduction}
\IEEEPARstart{A}{utonomous} exploration plays a pivotal role in many robotic applications, such as infrastructure inspection \cite{RAL_Xu_2023}, environmental modeling \cite{TIM_Li_2023,TIM_LIU_2024}, robot collaboration\cite{TRO_Zhou_2023}, search and rescue \cite{TIM_Zhang_2022}, among others. Benefiting from the high maneuverability, MAVs have been widely employed for exploring 3-D unknown environments. Nevertheless, the main challenge of autonomous exploration is that the robot is required to make decisions online for the navigation in unknown environments, which will become trickier when the employed MAV is resource-limited.

There are two mainstream strategies for addressing the autonomous exploration problem: 1) sampling-based exploration \cite{Bircher_2018, Duberg_2022, Schmid_2020, Xuetao_2021} and 2) frontier-based exploration \cite{yamauchi_1997, TIE_2024, RAL_2023, Zhou_2021, TII_2023}. 
For the sampling-based strategy, it explores the environment by randomly sampling viewpoints in the free space, which has great potential for exploring an independent region \cite{Bircher_2018}. However, this type of method usually gets stuck in local minima when the environment is complex and consists of many separate regions \cite{Schmid_2020}. Differently, frontier-based strategy starts by detecting frontiers of the whole environment and directs the robot to these regions to complete the exploration. Even this strategy can achieve the global evaluation of the exploration candidates, it usually results in robots going back and forth in the environment. To relieve the problem, the works proposed in \cite{TIE_2024, RAL_2023, Zhou_2021} compute the visit sequence of frontiers by solving the traveling salesman problem (TSP), which is known as an NP-hard problem. Therefore, the effort required for solving TSP will increase dramatically with the size of the environment, resulting in a formidable computational overhead for real-time path planning (especially in complex and large 3-D environments). It is intuitive to combine these two strategies in one unified framework \cite{Selin_2019, Dai_2020, TIM_chaoqun2020, Zhu_2021}. However, the critical problem is how to achieve an effective integration of different strategies. Existing methods generally adopt the sampling-based strategy for local exploration and utilize frontiers for global exploration planning when the local exploration finish. Practically, such a naive combination is still prone to local minima, leaving room for further improvement.

In this paper, we propose FSMP, a {\bf{F}}rontier-{\bf{S}}ampling-{\bf{M}}ixed exploration {\bf{P}}lanner, which can achieve real-time performance on-board an MAV. Specifically, we propose a fast field-of-view-based frontier detector called F$^3$D that only examines the minimal space every time. The detector can identify frontiers efficiently and accurately when new sensor measurements are received. To achieve global coverage, a road map is incrementally built by using the uniformly deterministic sampling method. With the road map in hand, the path from the current position of the robot to each exploration candidate and its corresponding motion cost can be queried directly, thereby facilitating the evaluation of all exploration candidates in a global context. To further improve the efficiency, a two-stage path planner is devised. First, we propose a lazy evaluation strategy that benefits from the breadth-first search (BFS) on the road map. This strategy allows us to obtain the global optimal exploration path, while avoiding the need to compute the utility of all candidates. Second, we optimize the best exploration path for improving its smoothness and time allocation, which can further reduce the exploration time. 

We evaluate FSMP by comparing it with existing popular methods in simulations. Furthermore, the proposed planner is successfully integrated into a fully autonomous MAV and its performance is validated in real-world environments. The comparative results demonstrate that our method outperforms the existing ones. The main advantages and contributions of this work are summarized as follows:
\begin{enumerate}
    \item We propose a frontier-sampling-mixed planner for fast autonomous exploration of complex and large 3-D environments, which can overcome the shortcomings of frontier-based and sampling-based methods, and achieve real-time performance on-board an MAV.
	\item To speed up the frontier detection, a fast field-of-view-based frontier detector (F$^3$D) with the guarantee of completeness and soundness is elaborately designed. F$^3$D can identify frontiers quickly and thus facilitate the following exploration process.
	\item To achieve complete exploration of the environment, we propose to incrementally build and maintain a road map using uniformly deterministic sampling. The road map can return a feasible path towards each exploration candidate directly and efficiently.
    \item We propose a two-stage path planner to rapidly compute the global optimal exploration path on the road map using the devised lazy evaluation strategy, and then smooth the path for further improving the exploration efficiency.
\end{enumerate}

The remainder of the paper is organized as follows. In Section II, we review related work on autonomous exploration in three aspects. We define notions, formulate the exploration problem and present an overview of our framework in Section III. In Section IV, we introduce the proposed exploration planner in detail. We evaluate the performance of our method comprehensively in Section V. Finally, we conclude the paper and outline the future work in Section VI.

\section{Related Work}\label{section:Related-Work}

\subsection{Frontier-based Exploration Algorithms}\label{sub_section:Frontier-based-Exploration}
The frontier-based exploration is the most classic strategy for robotic exploration and it is first proposed by Yamauchi \cite{yamauchi_1997}. The frontier is a boundary which separates the known areas from those unknown in the environment. Commanding the robot to the frontier region can observe new areas and thus complete the exploration. Therefore, frontier detection is the key component of frontier-based exploration algorithms \cite{yamauchi_1997, TIE_2024, RAL_2023, Brunel_2021, Zhou_2021}. The naive frontier detection method \cite{yamauchi_1997} needs to process the whole map data every time it is invoked, which is inefficient, especially in 3-D scenarios. Keidar \emph{et al.} \cite{IJRR2014} proposed two promising algorithms for frontier detection, i.e., Wavefront Frontier Detector (WFD) and Fast Frontier Detector (FFD). Instead of processing the whole map data in every iteration, WFD examines the known part of the map, which can effectively reduce the computational overhead. However, WFD will degenerate into the naive one when the map is extended. Different from WFD, FFD only examines the contour that consists of the sensor readings. Even though FFD can be executed fast, we observe it has two disadvantages. First, FFD should be invoked after each scan which may waste calculations. Second, FFD is a sensor-based method and it is only suitable for 2-D lasers. Similar to FFD, \cite{Quin_2021} proposed the Frontier-Tracing Frontier Detection (FTFD) algorithm that only needs to examine the perimeter of the field-of-view (FOV) of the sensor after each scan. In \cite{Quin_2014}, Quin \emph{et al.} proposed the Expanding-Wavefront Frontier Detector (EWFD), which just needs to search newly observed regions based on the assumption that the entropy of each grid cell can only decrease. However, the assumption will not hold when there are moving obstacles in the environment or the sensor has a non-negligible noise. To speed up frontier detection, the axis-aligned bounding box (AABB) is used in \cite{Zhou_2021}. Specifically, the AABB of the observations of the sensor will be recorded. Once a free cell in the AABB is found the region growing (RG) algorithm will be invoked for detecting new frontier cells. Nevertheless, it still scans unnecessary areas that are lying within the AABB.

\subsection{Sampling-based Exploration Algorithms}\label{sub_section:Sampling-based-Exploration}
For sampling-based autonomous exploration \cite{Bircher_2018, Respall_2021, Schmid_2020, Xuetao_2021}, it starts by randomly sampling exploration candidates (also known as viewpoints) in the free space of the environment. Then, the viewpoints will be evaluated for the determination of the exploration target. Inspired by the next-best-view (NBV) approaches \cite{Connolly_1985}, Bircher \emph{et al.} \cite{Bircher_2018} proposed the Next-Best-View Planner (NBVP) which runs in a receding horizon fashion. NBVP utilizes the rapid-exploring random tree (RRT) to generate random viewpoints and only executes the first segment of the best branch in every iteration. It is proven that NBVP is efficient for exploring simple individual regions. Nevertheless, when the environment is complex and composed of multiple sub-regions, NBVP will struggle in a local narrow area or terminate prematurely. To avoid getting stuck in the local area, Schmid \emph{et al.} \cite{Schmid_2020} proposed a informative path planner (IPP) to build a large-size single tree during exploration which can capture the global information of the environment, and thus avoids getting stuck in a local area. Yet, the major limitation of this method is that it needs a lot of computational resources to maintain the large tree. Different from the above methods, the work proposed in \cite{Xuetao_2021} only maintains the history trajectory that the robot has executed in earlier iterations. When the robot has fully explored its vicinity area, it will trace back the history trajectory for finding a valid exploration target. Tracing back previous trajectories is effective in avoiding getting stuck in local minima, but it is not efficient for the robot to transit to different regions.

\subsection{Hybrid Frontier-based and Sampling-based Algorithms}\label{sub_section:Frontier-sampling-mixed-Exploration}
There are also algorithms that incorporate frontier-based and sampling-based methods into a unified framework \cite{Selin_2019, Dai_2020, TIM_chaoqun2020, Zhu_2021}. Selin \emph{et al.} \cite{Selin_2019} proposed the Autonomous Exploration Planner (AEP), which utilizes NBVP for local exploration and the frontier-based method for global exploration. When the robot gets stuck, the frontiers are used to direct the robot to global regions. Nevertheless, AEP only caches the location of the frontiers, and a separate path planner is required for computing the global exploration path. Similar to \cite{Selin_2019}, the work in \cite{Dai_2020} proposed a hybrid exploration framework that combining frontier-based and sampling-based methods. In their framework, the hybrid planner first extracts the frontiers of the environment map. Then, exploration candidates are sampled from the map frontiers and evaluated for determining the next best view. Wang \emph{et al.} \cite{TIM_chaoqun2020} proposed to incrementally build a road map and extract frontiers around the nodes of the road map. The information gain and the cost-to-go of each exploration candidate can be queried on the road map efficiently. However, the road map needs to be extended after each scan and there is no completeness guarantee for the frontier detection algorithm. In \cite{Respall_2021}, the authors proposed a history-ware exploration planner. The method maintains a graph structure that consists of the old viewpoints from earlier iterations. Once the robot gets stuck in the dead end of the environment, it will reseed the active nodes on the graph to find a collision-free path towards global frontiers.

In summary, both frontier-based and sampling-based exploration methods have their inherent disadvantages. It is intuitive to combine these two strategies in a unified framework for improving the efficiency of robotic exploration. Existing methods, however, either achieve this combination loosely or make decisions greedily in the local context. Therefore, we propose a novel frontier-sampling-mixed planner for fast exploration of complex and large 3-D environments with an MAV.

\section{Problem Formulation}\label{section:Problem-Statement-and-Framework-Overview}

\subsection{Definitions}\label{sub_section:Problem-Formulation1}

\begin{figure}[!t]
	\begin{minipage}[!htbp]{\linewidth}
		\centering
		\subfigure[6-connected]{
			\label{figure1_a} 
			\includegraphics[width=1.22in]{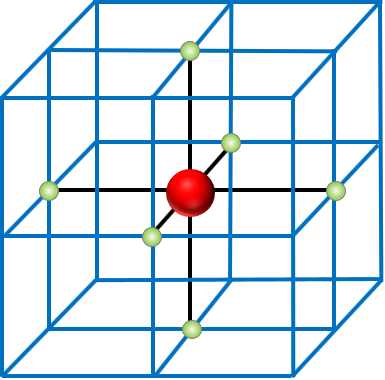}}
		\subfigure[26-connected]{
			\label{figure1_b} 
			\includegraphics[width=1.22in]{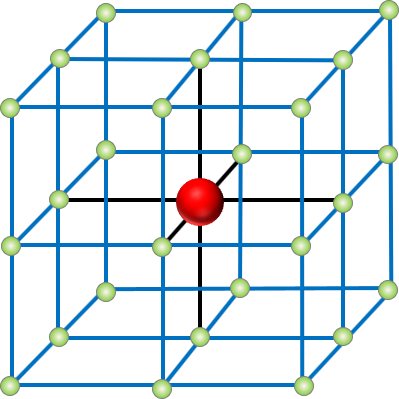}}
		\caption{Illustration of the adjacency relationship of the map voxels. The red sphere indicates a specific voxel, and the green spheres denote its neighbors.}
		\label{figure1} 
	\end{minipage}
\end{figure}

\textbf{Volumetric map}: a 3-D occupancy grid-based representation of the environment $\mathcal{V} \subset \mathbb{R}^3$, i.e., $\mathcal{M}$, where each grid cell in the volumetric map is named as a voxel $v$, $v$ $\in$ $\mathcal{V}$.

\textbf{Unknown space}: a sub-space of $\mathcal{V}$ that has not been covered yet by the onboard sensor of the robot, i.e., $\mathcal{V}_{un}$ ($\mathcal{V}_{un}$ $\subset$ $\mathcal{V}$).

\textbf{Free space}: a sub-space of $\mathcal{V}$ that has already been covered by the onboard sensor and it is not occupied by any obstacles, i.e., $\mathcal{V}_{free}$ ($\mathcal{V}_{free}$ $\subset$ $\mathcal{V}$).

\textbf{Occupied space}: a sub-space of $\mathcal{V}$ that has already been covered by the onboard sensor and it is occupied by obstacles, i.e., $\mathcal{V}_{occ}$ ($\mathcal{V}_{occ}$ $\subset$ $\mathcal{V}$).

\textbf{Frontier voxel}: a free voxel $v_f$ $\in$ $\mathcal{V}_{free}$ of the Volumetric map that has at least one unobserved voxel $v_{un}$ $\in$ $\mathcal{V}_{un}$ as its neighbor. As shown in Fig. \ref{figure1_a}, we designate the connectivity of voxels as 6-connected in this case.

\textbf{Frontier}: a set of connected frontier voxels, i.e., ${\rm{f}}$, ${\rm{f}} = \{v_f \in \mathcal{V}_{free} : \exists\; neighbor(v_f) \in \mathcal{V}_{un}\}$. In this case, the connectivity of frontier voxels is 26 (see Fig. \ref{figure1_b}).

\textbf{Region of interest (ROI)}: a sub-space of $\mathcal{V}$ that the frontier detector needs to search for detecting frontiers.

\textbf{Road map}: an undirected graph ${\cal R}$ that consists of nodes $N$ and edges $E$, i.e., ${\cal R} = \{N, E\}$. Essentially, the graph ${\cal R}$ is a sparse skeleton representation of $\mathcal{V}_{free}$.

\textbf{Exploration candidate}: a node of the road map that implies the location of informative regions needed to be explored.

\subsection{Problem Statement}\label{sub_section:Problem-Formulation2}
In this paper, we investigate the problem of autonomous exploration of 3-D environments with an MAV. The environment will be an unknown but spatially bounded space $\mathcal{V}  \subset \mathbb{R}^3$. We assume the MAV is equipped with a forward-looking depth sensor such as an RGB-D camera, which usually has a limited FOV. Let the state of the MAV be ${\rm{x}} := \{p, \xi\} \in \mathcal{X} \subset \mathbb{R}^4$, where $\mathcal{X}$ is the configuration space, $p = \{x,y,z\} \in \mathbb{R}^3$ is the position and $\xi \in \mathbb{R}^1$ is the yaw angle. At state ${\rm{x}}$, the MAV will collect its observation $O({\rm{x}})$ and integrate it into a probabilistic volumetric map $\mathcal{M}$ \cite{OctoMap_2013}. This map representation divides $\mathcal{V}$ into three sub-space: $\mathcal{V}_{un}$, $\mathcal{V}_{free}$, and $\mathcal{V}_{occ}$.

Initially, we have $\mathcal{V}_{un} = \mathcal{V}$. With the progress of autonomous exploration, it becomes $\mathcal{V}_{un} = \mathcal{V}\; \backslash\; \{\mathcal{V}_{free} \cup \mathcal{V}_{occ}\}$. Our ultimate objective is to explore the unknown space in the environment as much as possible, i.e., let $\mathcal{V}_{un} \to \emptyset$. To achieve the objective more efficiently, we aim to maximize the utility function ${\cal U}(\cdot)$ when computing the exploration path in each decision-making epoch, as follows:
\begin{eqnarray}\label{Eq_1}
{\nonumber\max \;{\kern 1pt} {\cal U}({\cal L}(\gamma)},\;{\cal I}({\gamma}))\\
{s.t.\left\{ {\begin{array}{*{20}{c}}
{{\gamma} \subset \mathcal{V}_{free}}\\
{g(\gamma) \ge 0}
\end{array}} \right.}
\end{eqnarray}
where $\gamma :[0,1] \to \mathcal{X}$ denotes the exploration path and it is defined as a sequence of states of the MAV, $\mathcal{L}(\gamma)$ is the motion cost for the robot following the exploration path, $\mathcal{I}({\gamma})$ is the expected exploration gain that the MAV can obtain along the path and ${g(\gamma) \ge 0}$ denotes the planned path should meet designated constraints. We can learn that ${\cal U}(\cdot)$ takes as input parameters the motion cost $\mathcal{L}(\gamma)$ and the expected exploration gain $\mathcal{I}({\gamma})$ and returns an evaluation result. The path that maximizes ${\cal U}$ will be executed to explore the environment.

\section{Methodology}\label{section:Methodology}

\subsection{Framework Overview}\label{sub_section:Framework-Overview}
\begin{figure}[!t]
	\centering
	\includegraphics[width=0.95\hsize]{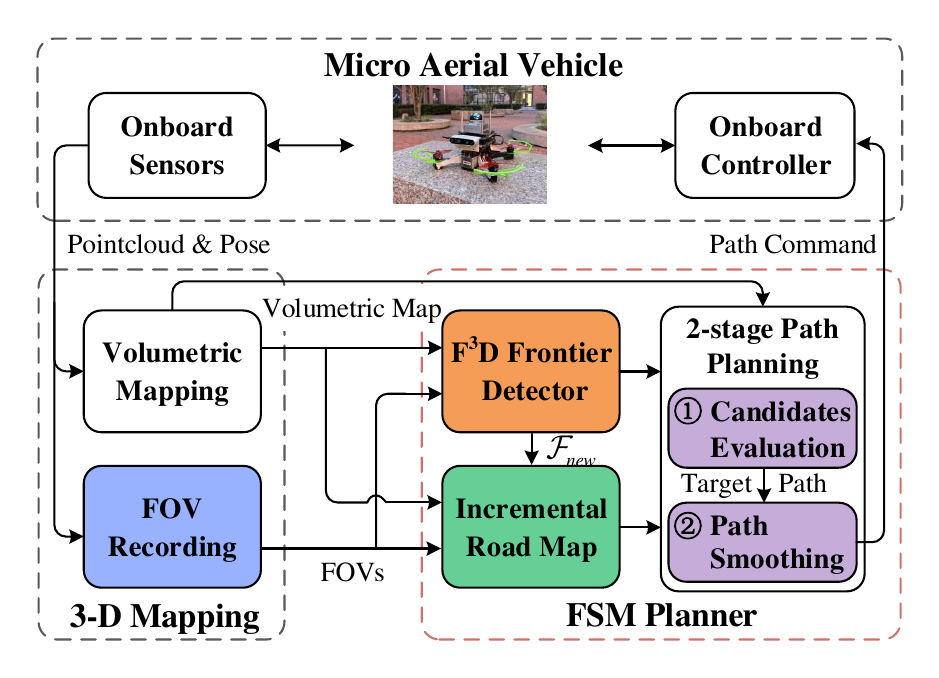}
	\caption{Overall system architecture of the proposed exploration framework.}
    \label{overall_framework}
\end{figure}
The overall system architecture of our exploration framework is shown in Fig. \ref{overall_framework}. The 3-D mapping module integrates the depth measurement and pose estimation into the volumetric map $\mathcal{M}$ while recording the FOVs of the onboard sensor. Whenever FSMP is invoked, the frontier detector first extracts new frontiers and eliminates the ones covered by the sensor (Section \ref{sub_section:Frontier-Detection}). After that, the incremental road map is extended, which can return a feasible path from the current position of the robot to any exploration candidate with its corresponding motion cost (Section \ref{sub_section:Global-Roadmap-Construction}). Exploration candidates are evaluated, and the one with the maximum exploration utility is selected as the next exploration target. The path towards the best exploration target will be further optimized for improving the efficiency (Section \ref{sub_section:Local-Path-Planning}). Finally, the best path is subscribed by the controller to command the MAV for accomplishing the exploration mission. The task will be terminated when there is no frontier exists.

\subsection{Fast Field-of-view-based Frontier Detection}\label{sub_section:Frontier-Detection}
\begin{figure}[!t]
	\centering
	\includegraphics[width=1.0\hsize]{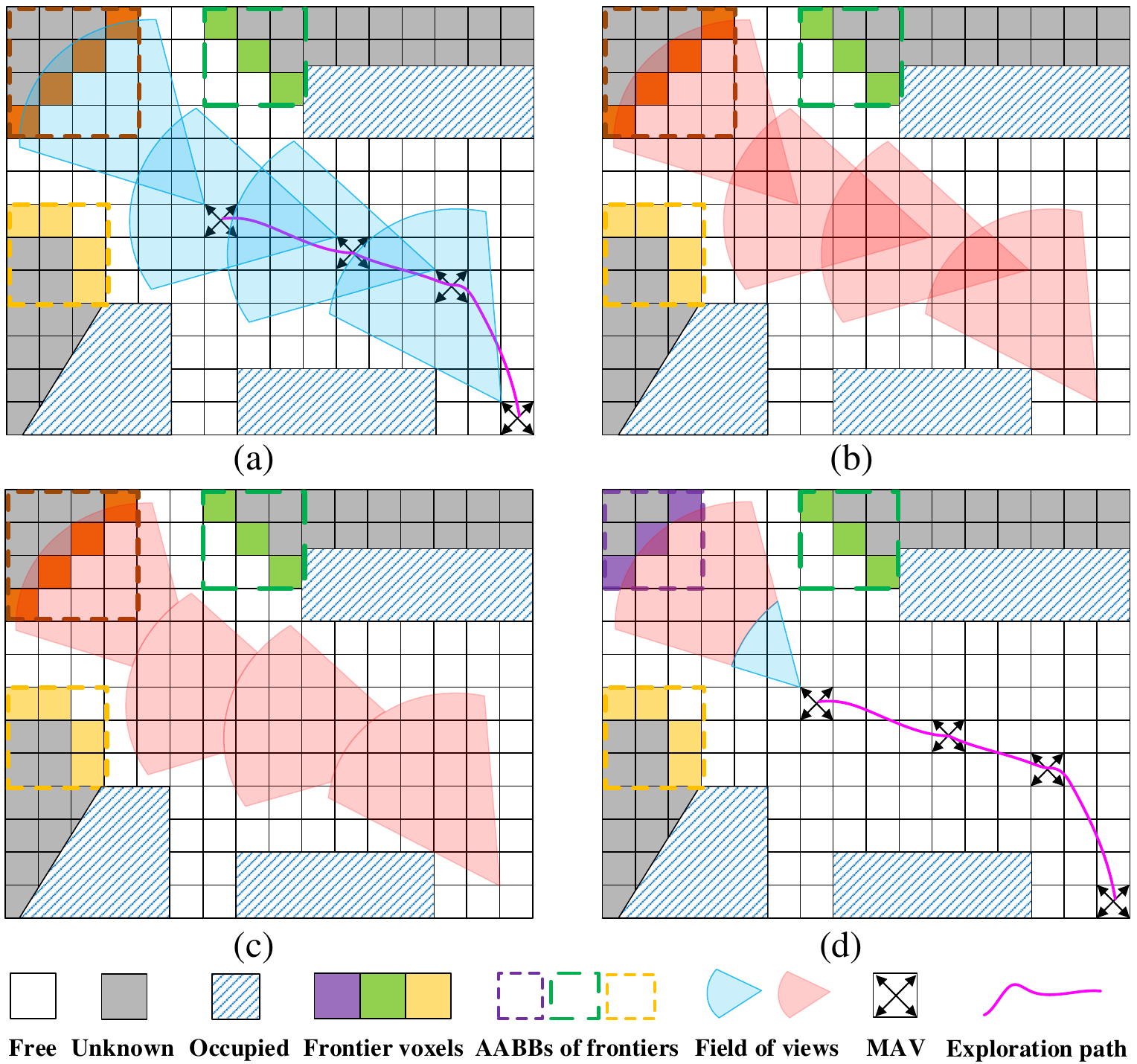}\vspace{-0.5cm}
	\caption{The 2-D illustration of the detection process of F$^3$D. (a) The MAV is performing the exploration path, meanwhile, the FOVs of its onboard sensor are recorded. (b) There are overlapping areas between the FOVs. (c) F$^3$D can avoid repeatedly examining the voxels of the overlapping areas. (d) In this case, new frontiers are detected in the last recorded FOV.} 
    \label{frontier_detection}
\end{figure}

\renewcommand{\algorithmicrequire}{\textbf{Input:}}
\renewcommand{\algorithmicensure}{\textbf{Output:}}
\begin{algorithm}[!t]
	\caption{Fast Field-of-view-based Frontier Detection}
	\label{Algorithm_1}
	\begin{algorithmic}[1]
		\Require {\emph{frontierSet}, ${\cal S}_t$, ${\cal F}_{t-1}$}
		\Ensure	{${\cal F}_t$}
		\State \emph{deleteSet} $\gets$ $\emptyset$;
		\State \emph{closedSet} $\gets$ $\emptyset$;
		\For{each {\rm{f}} $\in$ ${\cal F}_{t-1}$}
		\If {{\rm{f}} is intersecting with ${\cal S}_t$}
		\If {any \emph{voxel} $\in$ {\rm{f}} is changed}
		\State   \emph{deleteSet} $\gets$ \emph{deleteSet} $\cup$ {\rm{f}};
		\State   ${\cal F}_{t-1}$ $\gets$ ${\cal F}_{t-1}$ {\textbackslash}  {\rm{f}};
		\State   \emph{frontierSet} $\gets$ \emph{frontierSet} {\textbackslash}  {\rm{f}};
		\EndIf
		\EndIf
		\EndFor
		\State // Search for new frontiers in the ROI
		\For{each \emph{s} $\in$ ${\cal S}_t$}
		\For{each \emph{voxel} $\in$ \emph{s}}
		\If {\emph{voxel} $\in$ \emph{closedSet}}
		\State continue;
		\EndIf
		\If {any \emph{nbr} $\in$ \emph{Neighbor}(\emph{voxel}) belongs to a frontier {\rm{f}$_{nbr}$} $ \wedge $ any \emph{voxel} $\in$ {\rm{f}$_{nbr}$} is changed} 		
		\State   \emph{deleteSet} $\gets$ \emph{deleteSet} $\cup$ {\rm{f}$_{nbr}$};
		\State   ${\cal F}_{t-1}$ $\gets$ ${\cal F}_{t-1}$ {\textbackslash}  {\rm{f}$_{nbr}$};
		\State   \emph{frontierSet} $\gets$ \emph{frontierSet} {\textbackslash}  {\rm{f}$_{nbr}$};
		\EndIf
		\If {\emph{voxel} is a new frontier voxel}
		\State  \emph{newFrontier} $\gets$ \emph{Extract\_Frontier}(\emph{voxel}) (Alg. \ref{Algorithm_2});
		\State  ${\cal F}_{new}$ $\gets$ ${\cal F}_{new}$ $\cup$ \emph{newFrontier};
		\State  \emph{frontierSet} $\gets$ \emph{frontierSet} $\cup$ \emph{newFrontier};
		\State continue;
		\EndIf
		\State mark \emph{voxel} as \emph{closedSet};
		\EndFor
		\EndFor
		\State // Recheck the frontier voxels in the \emph{deleteSet}
		\For{each \emph{voxel} $\in$ \emph{deleteSet}}
		\If {\emph{voxel} is a frontier voxel $ \wedge $ \emph{voxel} $\notin$ \emph{frontierSet}}
		\State   \emph{newFrontier} $\gets$ \emph{Extract\_Frontier}(\emph{voxel}) (Alg. \ref{Algorithm_2});
		\State   ${\cal F}_{new}$ $\gets$ ${\cal F}_{new}$ $\cup$ \emph{newFrontier};
		\State   \emph{frontierSet} $\gets$ \emph{frontierSet} $\cup$ \emph{newFrontier};
		\EndIf
		\EndFor
		\State ${\cal F}_t$ $\gets$ \emph{Process\_Frontier}(${\cal F}_{new}$, ${\cal F}_{t-1}$);
	\end{algorithmic}
\end{algorithm}

\renewcommand{\algorithmicrequire}{\textbf{Input:}}
\renewcommand{\algorithmicensure}{\textbf{Output:}}
\begin{algorithm}[!t]
  	\caption{BFS-based Frontier Extraction}
  	\label{Algorithm_2}
  	\begin{algorithmic}[1]
  		\Require {\emph{voxel} // the starting point of the BFS}
  		\Ensure	{\emph{newFrontier} // the set of frontier voxels that are belonging to the same frontier}
        \State \emph{tempQueue} $\gets$ $\emptyset$;
        \State \emph{newFrontier} $\gets$ $\emptyset$;
        \State   ENQUEUE(\emph{tempQueue}, \emph{voxel});
        \While {\emph{tempQueue} is not empty}
        \State  \emph{voxel} $\gets$ DEQUEUE(\emph{tempQueue});
        \If {\emph{voxel} $\in$ \emph{closedSet}  $ \vee $ \emph{voxel} $\in$ \emph{frontierSet}}
        \State continue;
        \EndIf
        \If {\emph{voxel} is a frontier voxel}
        \State \emph{newFrontier} $\gets$ \emph{newFrontier} $\cup$ \emph{voxel};
        \For{each \emph{nbr} $\in$ \emph{Neighbors}\emph{(voxel)}}
        \If {\emph{nbr} $\notin$ \emph{closedSet}  $ \wedge $ \emph{voxel} $\notin$ \emph{frontierSet} $ \wedge $ \emph{voxel} is a frontier voxel}
        \State   ENQUEUE(\emph{tempQueue}, \emph{nbr});
        \EndIf
        \EndFor
        \EndIf
        \EndWhile
        \State \Return \emph{newFrontier}
  	\end{algorithmic}
\end{algorithm}

In this section, we propose our {\bf{F}}ast {\bf{F}}ield-of-view-based {\bf{F}}rontier {\bf{D}}etector, F$^3$D, which is an incremental frontier detector that only needs to examine a very minimal sub-space of the whole environment. Since in the frontier searching process only the information of the FOV of the onboard sensor is required for determining the ROI, F$^3$D is general for both 2-D and 3-D scenarios. In addition, F$^3$D is also not confined by the specific sensor type (e.g., laser range finders, sonars, depth cameras, etc.) since the FOV of any type of sensor can be defined in advance. Therefore, if the sensor type is determined, the ROI required by F$^3$D  can be computed directly.

As illustrated in Fig. \ref{frontier_detection}, every time the exploration path is performed by the MAV, the FOVs of its onboard sensor are recorded. Furthermore, when one frontier is detected, the axis-aligned bounding box (AABB) of this frontier is also computed and recorded. Once F$^3$D is invoked, we first extract the FOVs of the onboard sensor and store them in ${\cal S}_t$. Due to the map updating event only occurs in the ROI that consists of the FOVs, only the observations in the ROI can affect the frontier detection. Therefore, the frontier detector can be constrained to the ROI to maximize efficiency. As mentioned in \cite{Quin_2021}, the robot can not enter unknown space and all the known regions of the map are connected naturally. When the sensor covers unknown areas, the FOV of the sensor must span both known and unknown space (see Fig. \ref{frontier_detection}(a)), i.e., the FOV of the sensor must be intersecting with frontiers. Consequently, the old frontiers who are intersecting with the FOV of the onboard sensor should be removed from the frontier database (i.e., $\mathcal{F}_{t-1}$) and reexamined when searching for new frontiers (Alg. \ref{Algorithm_1}, lines 3-11 and lines 33-39). If they are still frontiers, they will be stored in $\mathcal{F}_t$. Otherwise, they will be marked as non-frontier voxels. Furthermore, when there are dynamic objects or the sensor data has a non-negligible noise, new frontiers will also appear in the location where no old frontiers exist. To guarantee the completeness of the frontier detection, F$^3$D will examine all voxels that are lying within the space covered by the sensor of the robot since last call.

It can be seen in Fig. \ref{frontier_detection}(b) that there are overlapping areas between the FOVs of the sensor, and this can cause inefficiencies in frontier detection especially when the map is updating frequently. In order to avoid rescanning the overlapping areas, F$^3$D maintains two sets in the frontier detection process. This strategy is inspired by the A$^*$ algorithm \cite{A_star_1968}. Specifically:
\begin{enumerate}
    \item frontierSet: a set of voxels that have already been marked as valid frontier voxels (Alg. \ref{Algorithm_1}, lines 8, 21, 37).
	\item closedSet: a set of voxels lying within the ROI and have already been examined by F$^3$D (Alg. \ref{Algorithm_1}, lines 2, 29).
\end{enumerate}
with the above two sets, F$^3$D can determine whether there is a need to examine a specific voxel. Eventually, F$^3$D can guarantee the searching region required by frontier detection is non-repetitive, as the one shown in Fig. \ref{frontier_detection}(c).

In one specific FOV, we utilize a BFS-based strategy for extracting and clustering new frontiers. First, the voxels in the FOV will be examined in turn to check if it is a frontier voxel (Alg. \ref{Algorithm_1}, lines 13-31). Once a frontier voxel is detected, the BFS algorithm will be invoked for frontier extraction (Alg. \ref{Algorithm_1}, lines 24, 35). The BFS utilizes the new frontier voxel as a starting point to extract all frontier voxels that are belonging to one cluster based on the connectivity definition (Alg. \ref{Algorithm_2}, lines 4-17). Consequently, the detected frontier voxels are grouped naturally, and no more independent clustering operation is required. The new frontier is stored in \emph{newFrontier}, a database used to store the new frontier voxels temporarily (Alg. \ref{Algorithm_2}, lines 18). In general, the size of a frontier might be too large to make efficient decisions. Therefore, similar to \cite{TIM_Zhang_2022} and \cite{Zhou_2021}, we perform an additional frontier process operation to split large frontiers into small clusters and compute their centroids by the function \emph{Process\_Frontier()} (Alg. \ref{Algorithm_1}, line 40). This operation is conducted when all new frontiers are detected.

As shown in Fig. \ref{frontier_detection}(d), F$^3$D can detect frontiers completely as well as accurately with high efficiency, and it is not necessary to be executed after each scan. Therefore, the flexibility of the proposed frontier detector is satisfactory. In Section. \ref{sub_section:Theoretical-Analysis}, we will prove that F$^3$D is complete and sound, and we will also analyze the computational complexity of F$^3$D.

\subsection{Uniformly Deterministic Sampling based Incremental Road Map Construction}\label{sub_section:Global-Roadmap-Construction}
The road map is responsible for pathfinding in the exploration. Initially, the road map ${\cal R}$ = $\{p_0, \emptyset\}$, where $p_0$ is the home position of the MAV. In our case, ${\cal R}$ needs to extend when new regions are covered by the sensor. Generally, random sampling strategies will be employed to sample a large space of the environment to construct the road map. However, random sampling tends to result in an uneven distribution of the road map or needs a lot of effort to obtain a sequence of evenly distributed nodes. Based on our tests, to achieve a similar distribution of nodes, the random sampling strategy typically requires sampling and evaluating several times more nodes than the deterministic sampling strategy. To better estimate the utility of the exploration candidates, the road map is required to distribute in the free space of the environment comprehensively and evenly. This motivates us to propose an incremental road map construction algorithm that uniformly samples the new regions observed by the onboard sensor of the MAV.
\begin{figure}[!t]
	\centering
	\includegraphics[width=1.0\hsize]{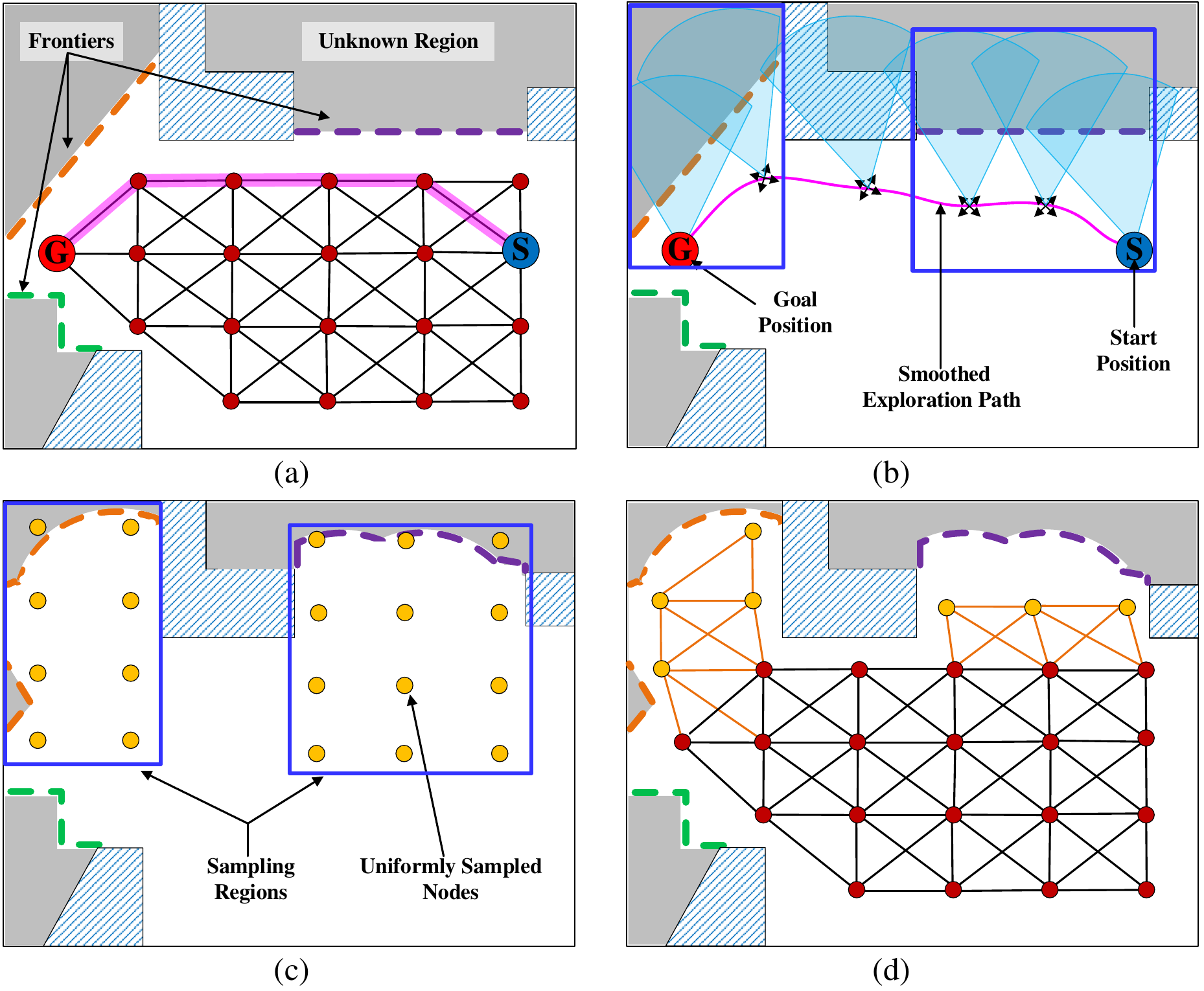}\vspace{-0.5cm}
	\caption{The 2-D schematic diagram of our uniformly deterministic sampling based incremental road map construction. (a) The exploration path can be found on the road map ${\cal R}$. (b) The MAV follows the exploration path while recording its FOVs. (c) All the FOVs that intersect with the same new frontier are extracted and merged into one AABB (i.e., the blue rectangles). In that region, uniform sampling is conducted. (d) The feasible new nodes (the yellow circles) and corresponding edges (the brown lines) are added to ${\cal R}$.}
	\label{roadmap_construction}
\end{figure}

As shown in Fig. \ref{roadmap_construction}, when the exploration path is determined, the MAV will follow this path to explore unknown regions of the environment. It is obvious that only the newly covered regions need to be sampled for sampling new road map nodes. If the sampling can be biased in these regions, the sampling efficiency will be improved. Fortunately, the FOVs of the onboard sensor are recorded in the frontier detection stage and we will utilize them to determine the necessary sampling region (Alg. \ref{Algorithm_3}, lines 3-12). Specifically, we first extract the FOVs that are intersecting with the new detected frontiers ${\cal F}_{new}$. In this process, the AABB of a FOV is computed to represent its effective region. Furthermore, if there are FOVs intersecting with each other, their AABBs will be merged into a larger one, e.g., the blue rectangles in Fig. \ref{roadmap_construction}(b) and (c) (Alg. \ref{Algorithm_3}, line 6). It is worth noting that our method can compute the sampling region adaptively instead of using a size-fixed and overestimated one.

\renewcommand{\algorithmicrequire}{\textbf{Input:}}
\renewcommand{\algorithmicensure}{\textbf{Output:}}
\begin{algorithm}[!t]
	\caption{Incremental Road Map Construction}
	\label{Algorithm_3}
	\begin{algorithmic}[1]
		\Require {$\mathcal{V}_{free}$, ${\cal S}_t$, ${\cal F}_{new}$} // ${\cal S}_t$ and ${\cal F}_{new}$ are from Alg. \ref{Algorithm_1}
		\Ensure	{${\cal R}$ // the road map}
		\State ${\cal S}_{temp}$ $\gets$ $\emptyset$;
		\State // Sampling regions determination
		\For{each \emph{s} $\in$ ${\cal S}_t$}
		\If {\emph{s} is intersecting with {\rm{f}}} //  $\forall\; {\rm{f}} \in {\cal F}_{new}$
		\If {\emph{s} is intersecting with $\hat{s}$} // $\forall\; \hat{s} \in {\cal S}_{temp}$
		\State $\hat{s}$ $\gets$ $\hat{s}$ $\cup$ \emph{s};
		\State  ${\cal S}_{temp}$ $\gets$ ${\cal S}_{temp}$ $\cup$ $\hat{s}$;
		\State  continue;
		\EndIf
		\State  ${\cal S}_{temp}$ $\gets$ ${\cal S}_{temp}$ $\cup$ \emph{s};
		\EndIf
		\EndFor
		\State // Road map construction
		\For{each \emph{s} $\in$ ${\cal S}_{temp}$}
		\State ${\cal P}$ $\gets$ \emph{Uniform\_Sampling}(\emph{s});
		\For{each feasible \emph{p} $\in$ ${\cal P}$}
		\If {${\left\| {p} - {p_i} \right\|} < {d_{\min }}$} // $\forall\; {p_i} \in {\cal R}$
		\State  continue;
		\EndIf
		\State ${\cal P}_{near}$ $\gets$ \emph{Near}(\emph{p}, ${\cal R}$, $d_{max}$);
		\For{each ${p_{near}}$ $\in$ ${\cal P}_{near}$}
		\If {\emph{No\_Collision}($p$, ${p_{near}}$, $\mathcal{V}_{free}$) and \emph{$||p - {p_{near}}|| < d_{max}$}}
		\State $E$ $\gets$ \emph{Connect}($p$, ${p_{near}}$);
		\State ${\cal R}$ $\gets$ ${\cal R}$ $\cup$ $\{p, E\}$;
		\EndIf
		\EndFor
		\EndFor
		\EndFor
	\end{algorithmic}
\end{algorithm}

In each sampling region, we utilize the Sukharev grid \cite{Janson_2018}, which aims to achieve the most uniform distribution possible over the sampling region, to generate a set of new nodes. Specifically, a Sukharev grid is generated with the given resolutions $\left[ l_x, l_y, l_z \right]$ in the $x$, $y$, $z$ directions to cover each sampling region. After that, the sampling region will be partitioned into separated cells, and the centroids of these cells form the set of new nodes, ${\cal P}$. If one new node $p_{new} \in {\cal P}$ meets the following two conditions, it will be considered as a feasible node for constructing the road map, i.e.
\begin{equation}\label{Eq_2}\vspace{-0.25cm}
  {p_{new} \in {V_{free}}}
\end{equation}
\begin{equation}\label{Eq_3}
  {{d_{\min }} \le D(p_{new},{p}) \le {d_{\max }}, \;\forall\; {p} \in {\cal R}}
\end{equation}
where $p$ is an existing node of ${\cal R}$ and $D(p_{new},{p})$ is the Euclidean distance between $p_{new}$ and $p$. $d_{\min}$ and $d_{\max}$ are two thresholds for restricting the edge length of the road map. The first condition (\ref{Eq_2}) requires the new node is lying within the free space. The second condition (\ref{Eq_3}) ensures that the road map is not too dense and distributed evenly. Furthermore, when adding $p_{new}$ to ${\cal R}$, the edge between $p_{new}$ to the node $p \in {\cal R}$, $E(p_{new}, p)$, is also added if it is collision-free and meeting the condition stated in (\ref{Eq_3}). The detail of the incremental road map construction process is shown in Alg. \ref{Algorithm_3}, lines 14-28. Further, to cope with dynamic or previously overlooked obstacles, we periodically check and prune the road map ${\cal R}$ for ensuring all the nodes and edges of ${\cal R}$ are collision-free.

After the road map is extended, we compute the nearest node $p_{nearest}$ for each frontier ${\rm{f}} \in {\cal F}_t$ and these nodes will be regarded as exploration candidates. Note that the candidate node is the one that can connect to its corresponding frontier without collision. With the road map, the path from the current position of the robot to the candidate as well as the corresponding motion cost can be queried efficiently. In what follows, we will show that one can achieve the global optimal decision-making based on the road map.

\begin{figure*}[!t]
	\centering
	\includegraphics[width=0.9\hsize]{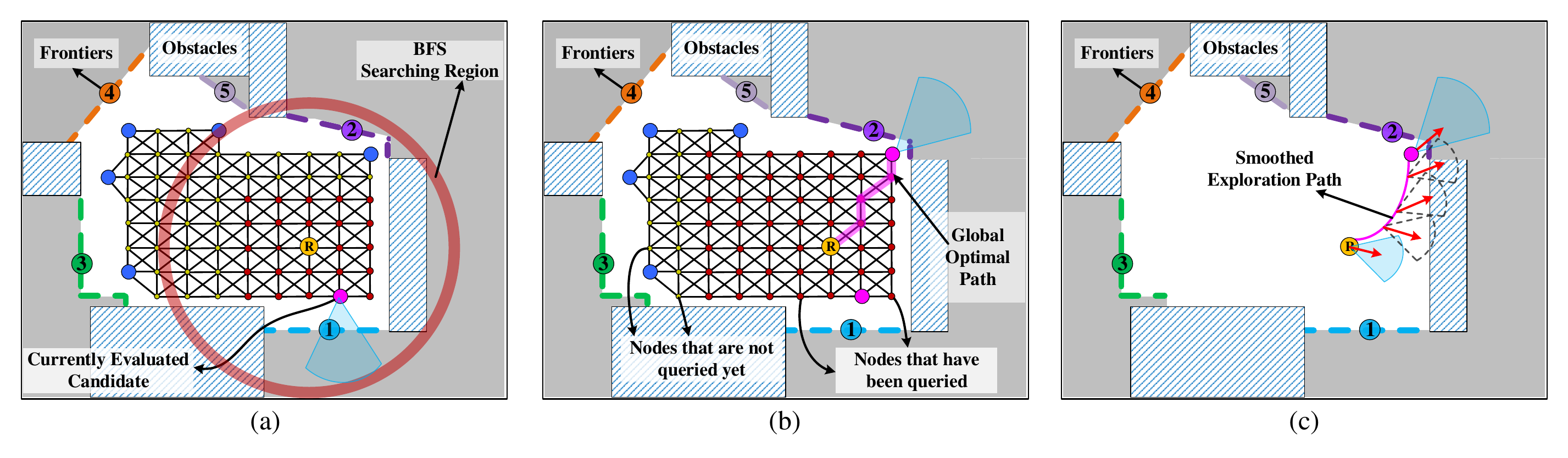}
	\caption{The 2-D illustration of our two-stage exploration path planning method. (a) When the Dijkstra algorithm finds a new exploration candidate (the purple solid circle), the new radius of the Dijkstra's searching region (the region inside the red ring) will be computed. (b) The Dijkstra algorithm continues searching outward and finds the next new exploration candidate (the purple solid circle at the top right corner). Obviously, this new candidate has the maximum exploration gain, and thus no further searching is necessary. The reason is that the remaining candidates can not achieve a higher utility than this new candidate, since they are guaranteed to have a higher motion cost than this new candidate. (c) The global optimal exploration path is obtained, and we perform a path smoothing technique to facilitate the MAV's tracking.}
	\label{informative_path_planning}
\end{figure*}
\subsection{Two-stage Exploration Path Planning}\label{sub_section:Local-Path-Planning}
The path $\gamma({\rm{x}}_r, {\rm{x}}_i)$ from the current state ${\rm{x}}_r$ of the MAV to any exploration candidate ${\rm{x}}_i$ can be queried efficiently on the road map ${\cal R}$ using a graph search algorithm, e.g., the Dijkstra algorithm. In addition, the utility of the candidate is typically computed by considering both the exploration gain and the motion cost \cite{survey_2012}. In this section, we propose a two-stage planner that computes the global optimal exploration path first and then improves its smoothness and time allocation.

\subsubsection{Global Optimal Exploration Path Planning}\label{sub_sub_section:Information-Gain-Evaluation}
In the first stage, we evaluate all exploration candidates by considering their exploration gains and motion costs simultaneously. The utility function of (\ref{Eq_1}) is formulated as follows
\begin{equation}\label{Eq_4}
  {\cal U}({\cal L}(\gamma),\;{\cal I}({\gamma})) = {\cal I}({\gamma({\rm{x}}_r, {\rm{x}}_i)}){e^{ - {\cal L}({\gamma({\rm{x}}_r, {\rm{x}}_i)})}}
\end{equation}
where the exploration gain ${\cal I}$ can be computed by
\begin{equation}\label{Eq_5}
I(\gamma ({{\rm{x}}_r},{{\rm{x}}_i})) = \left\{ {\begin{array}{*{20}{c}}
{{\rm{v}}Gain({{\rm{x}}_i})}\\
{\rm{or}}\;\;\;\;\;\\
{{{\rm{i}}Gain({\rm{x}}_i)}}
\end{array}} \right.
\end{equation}
where ${\rm{v}}Gain()$ is the voxel gain, ${\rm{i}}Gain()$ is the information gain. To maximize the exploration gain of each specific exploration candidate ${\rm{x}}_i := \{p_i, \xi _i\}$, we determine the yaw angle ${\xi _i}$ of the MAV at ${p_i}$ as the one maximizing ${\cal I}$ by using a yaw optimization method similar to \cite{Zhou_2021}.

\emph{Remark 1:} For the exploration candidate, one can evaluate its exploration gain by calculating either the voxel gain or information gain. The voxel gain \cite{Bircher_2018, Selin_2019}, ${\rm{v}}Gain()$, is the total number (or volume) of unknown voxels that can be observed at the exploration candidate. The information gain \cite{TASE_Chaoqun_2019, TASE_haiming_2019}, ${\rm{i}}Gain()$, is the mutual information that the robot can obtain at the exploration candidate. In this work, we employ ${\rm{v}}Gain()$ to compute the exploration gain of one candidate, as it is more efficient than computing the information gain ${\rm{i}}Gain()$.

The motion cost for the MAV following the exploration path is formulated as follows
\begin{equation}\label{Eq_6}
   {\cal L}({\gamma({\rm{x}}_r, {\rm{x}}_i)}) = {\lambda} dist({\gamma({\rm{x}}_r, {\rm{x}}_i)})
\end{equation}
where $dist()$ denotes the path length queried on the road map and ${\lambda} \geq 0$ is a tunable parameter.

To achieve the global optimal planning, it is intuitive to evaluate the utility of all candidates and then choose the best one as the exploration target. However, computing the exploration gain and motion cost for all candidates typically results in a high computational overhead, which may prevent the application of our method in complex and large 3-D environments. Therefore, we propose a lazy evaluation strategy that allows us to obtain the global optimal exploration path while only evaluating a subset of all candidates. As shown in Fig. \ref{informative_path_planning}, we utilize the Dijkstra algorithm to search paths between the robot's current state ${\rm{x}}_r$ and the exploration candidates on the road map. Recall that when the onboard sensor is determined, its FOV can be defined in advance. Therefore, the maximum attainable exploration gain at any candidate state ${\rm{x}}$ will not exceed a bounded constant value $I_{max}$. With this fact in mind, we can continually reduce the radius of the Dijkstra's searching region when evaluating candidates, thereby lowering the computational cost without compromising the global optimality. 

Given the currently evaluated candidate ${\rm{x}}_i$, we can compute its utility ${\cal U}_i$ by using (\ref{Eq_4}). Let us assume that a candidate ${\rm{x}}_j$ to be evaluated has the maximum attainable exploration gain $I_{max}$. Then, if we want ${\cal U}_j \geq {\cal U}_i$, the maximum motion cost of ${\rm{x}}_j$ should meet the following condition
\begin{equation}\label{Eq_7}
	{\cal L}_j^{max} \leq - \frac{1}{\lambda} \ln( \frac{{\cal U}_i}{I_{max}})
\end{equation}    
where we always have ${\cal U}_i =  {\cal I}_i {e^{ - {\cal L}_i}} \leq I_{max}$, which can guarantee ${\cal L}_j^{max}$ is valid, and ${\cal L}_j^{max}$ will be set as the new radius of the Dijkstra's searching region. 

At the beginning of each planning iteration, we first set the searching region as ${\cal L}^{max} = + \infty$. As shown in Fig. \ref{informative_path_planning}(a), every time the Dijkstra algorithm searches a new candidate ${\rm{x}}_{new}$, we first compute its utility ${\cal U}_{new}$ and the corresponding maximum motion cost ${\cal L}_{new}^{max}$. If ${\cal U}_{new}$ is larger than the utility of the previous best candidate ${\rm{x}}_{old}$, we replace ${\rm{x}}_{old}$ by ${\rm{x}}_{new}$. Otherwise, the Dijkstra algorithm will continue searching for the next candidate (see Fig. \ref{informative_path_planning}(b)). The above process is performed iteratively until the Dijkstra algorithm reaches the region boundary, and thus, stop the search. After that, we can obtain the global optimal exploration goal ${\rm{x}}_g$ as well as the shortest path towards it. Practically, the Dijkstra's searching region will be reduced dramatically after evaluating several candidates, resulting in a significant improvement of efficiency.

\subsubsection{Path Smoothing}\label{sub_sub_section:Exploration-Path-Planning}
In the second stage, we will refine the optimal exploration path that directly queried on the road map. It can be seen in Fig. \ref{informative_path_planning}(b) that the global optimal exploration path may be tortuous and hence not appropriate for the robot tracking. To improve the smoothness of the path, we utilize a path smoothing algorithm similar to \cite{Tianyi_2024} that can compute the smoothest path while ensuring its clearance for safe flight in narrow spaces. As shown in Fig. \ref{informative_path_planning}(c), the smoothed exploration path will be more appropriate for tracking. In addition, to further speed up the exploration process, a numerical integration-based time-optimal velocity planning algorithm \cite{kunz2012time} is employed to generate the optimal time allocation along the smoothed exploration path.

\subsection{Theoretical Analysis}\label{sub_section:Theoretical-Analysis}

\subsubsection{F$^3$D Completeness and Soundness}\label{sub_sub_section:FFFD-Completeness-and-Soundness}
Similar to \cite{IJRR2014}, we begin with a lemma that demonstrates F$^3$D can always identify all new frontiers correctly (i.e., F$^3$D will completely and correctly mark all new frontier voxels as frontier, which were not frontier voxels before current detection iteration). This lemma can help us prove F$^3$D is complete and sound.
\begin{lemma}
	Suppose $v_f$ is a new frontier voxel at iteration $t$, which was not marked as a frontier voxel at any iteration $\tau$, where $\tau < t$. Then, given a set of observations $O^t_{t-1}$ that accumulated during iteration $t-1$ and iteration $t$, F$^3$D will mark $v_f$ as a frontier voxel.
\end{lemma}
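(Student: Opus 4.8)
The plan is to leverage the single structural fact that makes F$^3$D correct: every change to the occupancy map is confined to the region of interest (ROI), which is exactly the union of the recorded FOVs stored in ${\cal S}_t$, and F$^3$D exhaustively sweeps this ROI. The proof then splits into two parts. First, I would show that a \emph{new} frontier voxel $v_f$ must lie inside the ROI. Second, I would trace the control flow of Alg.~\ref{Algorithm_1} (with the BFS of Alg.~\ref{Algorithm_2}) to show that any frontier voxel lying in the ROI is necessarily inserted into \emph{frontierSet}.

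For the first part, I would write the frontier condition explicitly: $v_f$ is a frontier voxel at iteration $t$ iff $v_f \in \mathcal{V}_{free}$ and at least one $6$-neighbor of $v_f$ lies in $\mathcal{V}_{un}$. Because $v_f$ is \emph{new}, this condition holds at $t$ but failed at every $\tau < t$, so the occupancy state deciding it must have changed through the observations $O^t_{t-1}$, whose footprint is the union of the FOVs in ${\cal S}_t$. The delicate sub-case is when $v_f$ was already free before $t$: then its failure to be a frontier means all its neighbors were known, and acquiring an unknown neighbor would require a known voxel to revert to $\mathcal{V}_{un}$, which integrating new observations never does (observations only drive voxels toward $\mathcal{V}_{free}$ or $\mathcal{V}_{occ}$). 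Ruling this out leaves only the route in which $v_f$ itself transitioned into $\mathcal{V}_{free}$ during $O^t_{t-1}$, which forces $v_f$ to have been observed, hence $v_f$ lies in some FOV $s \in {\cal S}_t$, i.e. in the ROI.

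For the second part, given $v_f \in s$ for some $s \in {\cal S}_t$, I would trace Alg.~\ref{Algorithm_1}: the loops on lines 13--31 visit every voxel of every FOV in ${\cal S}_t$, so $v_f$ is visited. It cannot be short-circuited by \emph{closedSet}, because a voxel enters \emph{closedSet} only at line 29, which is reached only when the frontier test on line 23 has already failed via the preceding \emph{continue}; thus a genuine frontier voxel is never placed in \emph{closedSet}. When $v_f$ is first dequeued it therefore passes the test on line 23 and triggers \emph{Extract\_Frontier}, which collects $v_f$ into \emph{newFrontier}; line 26 then merges \emph{newFrontier} into \emph{frontierSet}. The only alternative is that an earlier BFS, seeded from a $26$-connected new frontier voxel, had already placed $v_f$ in \emph{frontierSet}; but that branch establishes the conclusion as well. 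In either case $v_f$ is marked as a frontier voxel.

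The main obstacle I anticipate is the first part — rigorously pinning $v_f$ inside the ROI — since it rests on the monotonicity of the mapping model and on carefully dismissing the degenerate case where $v_f$ was free yet non-frontier before $t$; there the assertion that "$v_f$ was just observed" is not immediate and must be derived by excluding a neighbor spontaneously becoming unknown. Once ROI-membership is secured, the second part is a straightforward, essentially bookkeeping, traversal of the pseudocode.
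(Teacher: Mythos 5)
Your proposal is correct and follows essentially the same route as the paper's proof: first argue that any \emph{new} frontier voxel must lie inside the region covered by $O^t_{t-1}$ (the paper states this as ``only free voxels in $\mathcal{V}(O^t_{t-1})$ have the potential to be a frontier''), and then observe that the sweep in Alg.~\ref{Algorithm_1}, lines 13--31, examines every such voxel and marks it. Your version merely makes explicit two points the paper leaves implicit --- the monotonicity argument ruling out a known voxel reverting to unknown, and the check that \emph{closedSet} cannot short-circuit a genuine new frontier voxel before it reaches \emph{frontierSet}.
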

\begin{proof}
	Let $O^t_{t-1}$ be the set of sensor observations accumulated during the period of two iterations $t$ and $t-1$, and let the voxels covered by $O^t_{t-1}$ be denoted by $\mathcal{V}(O^t_{t-1})$, i.e., the space that consists of $O^t_{t-1}$. At each iteration, only voxels in $\mathcal{V}(O^t_{t-1})$ will change their states. Furthermore, according to the frontier definition, only free voxels in $\mathcal{V}(O^t_{t-1})$ have the potential to be a frontier. F$^3$D will examine every free voxel in $\mathcal{V}(O^t_{t-1})$ and mark it as a frontier if it is a valid frontier voxel (Alg. \ref{Algorithm_1}, lines 13-31).
\end{proof}

Based on Lemma 1, we can draw Theorem 1 as follows.
\begin{theorem}
	Let $v_f$ be a valid frontier voxel at iteration $t$. Then F$^3$D will mark $v_f$ as a frontier voxel given the set of observations $\{O^1_0, ..., O^t_{t-1}\}$ that accumulated up to now.
\end{theorem}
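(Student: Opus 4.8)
The plan is to prove Theorem 1 by induction on the iteration index $t$, using Lemma 1 to dispatch the newly appearing frontiers and a separate structural argument to handle frontiers that persist from earlier iterations. The inductive hypothesis I would carry is that, at the end of iteration $t-1$, the database $\mathcal{F}_{t-1}$ contains exactly the valid frontier voxels present at that iteration. The base case $t=1$ is immediate: before the first detection there are no recorded frontiers, so every valid frontier voxel at iteration $1$ is by definition a new frontier voxel, and Lemma 1 applied to $O^1_0$ guarantees it is marked.

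For the inductive step, I would partition the valid frontier voxels at iteration $t$ into two disjoint classes. A voxel $v_f$ that was not a frontier at iteration $t-1$ is a new frontier voxel, so Lemma 1 applied to the observations $O^t_{t-1}$ marks it directly via the FOV scan (Alg. 1, lines 13-31, invoking the BFS extraction of Alg. 2). The remaining case is a voxel $v_f$ that was already a frontier at iteration $t-1$ --- hence, by the inductive hypothesis, a member of $\mathcal{F}_{t-1}$ --- and that remains a valid frontier voxel at iteration $t$. Here I must show that $v_f$ is never permanently discarded from the database.

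To establish persistence I would trace every code path in Algorithm 1 that can remove a voxel from $\mathcal{F}_{t-1}$. A frontier is only placed in the \emph{deleteSet} and stripped from $\mathcal{F}_{t-1}$ when it intersects the current FOV set $\mathcal{S}_t$ and at least one of its voxels has changed state (lines 3-11, together with the analogous neighbor-triggered removal in lines 18-23). If $v_f$ is never subject to such a removal, it survives untouched into $\mathcal{F}_t$. If it is removed, I would appeal to the recheck loop (lines 33-39): every voxel placed in \emph{deleteSet} is re-examined, and any voxel that is still a valid frontier voxel and is not already present in \emph{frontierSet} is re-extracted via Alg. 2 and reinserted. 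Since $v_f$ is assumed to still be a valid frontier voxel at iteration $t$, it is necessarily restored, which closes the induction.

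The step I expect to be the main obstacle is the persistence argument for the second case, specifically the bookkeeping needed to confirm that the recovery is exhaustive. I would need to verify that \emph{both} removal sites (lines 3-11 and lines 18-23) always feed the affected voxels into \emph{deleteSet}, so that no previously valid voxel can leave the database without later re-examination, and that the guard \emph{voxel} $\notin$ \emph{frontierSet} in the recheck loop never blocks a genuinely valid frontier voxel from being reinstated --- it only suppresses redundant re-extraction of a voxel already recovered during the FOV scan. Establishing this invariant, that the union of the deletion sites is covered by the single recovery site without omission, is the delicate part; once it is secured, combining it with Lemma 1 yields the theorem.
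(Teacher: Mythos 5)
Your proposal is correct and follows essentially the same route as the paper: the same split into newly appearing frontier voxels (dispatched by Lemma~1) versus previously detected ones, and the same tracing of the removal sites (Alg.~1, lines 3--11 and 18--23) against the recheck loop (lines 33--39) to show persistence. The only difference is that you make the induction over iterations explicit where the paper argues it informally via the earliest iteration $\tau$ at which $v_f$ became a frontier; this is a welcome tightening rather than a different argument.
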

\begin{proof}
In order to prove the completeness of F$^3$D, there are two cases that need to be examined:	

{\bf{Case 1.} $v_f$ is a new frontier voxel at iteration $t$.} This case can be proved directly by Lemma 1.

{\bf{Case 2.} $v_f$ was a new frontier voxel at iteration $\tau$, where $\tau < t$.} Let $\tau$ be the earliest iteration in which $v_f$ was as a frontier. Based on Lemma 1, we know that $v_f$ was marked as a frontier at that time. If $v_f$ is not intersecting with the FOV of the onboard sensor recorded during the interval of iterations $t-1$ and $t$, which means $v_f$ has no chance to change its states because it was not covered yet by the sensor of the robot. If $v_f$ is intersecting with one of the FOVs, it will be reexamined by F$^3$D to confirm whether it is a frontier or not (Alg. \ref{Algorithm_1}, lines 3-11 and 33-39). Therefore, if $v_f$ is still be a frontier voxel at iteration $t$, it will be correctly marked by F$^3$D. Furthermore, F$^3$D will always maintain knowledge of the valid old frontiers from the time that they are identified. Consequently, $v_f$ must be a frontier voxel that is maintained by F$^3$D at iteration $t$.

The above two cases show that F$^3$D will identify $v_f$ to be a valid frontier voxel at iteration $t$. Consequently, we can say Theorem 1 is true for any valid frontier voxel at iteration $t$, which follows that F$^3$D is complete.
\end{proof}
In order to prove the soundness of F$^3$D, we must demonstrate that there does not exist a case where F$^3$D identifies a voxel as a frontier voxel when it is not.
\begin{theorem}
	Let $v$ be an arbitrary voxel at iteration $t$, which is not a frontier voxel. Then F$^3$D will not mark $v$ as a frontier voxel given a set of observations $\{O^1_0, ..., O^t_{t-1}\}$.
\end{theorem}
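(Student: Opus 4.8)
The plan is to mirror the case split used for Theorem~1, but to argue in the reverse direction: rather than showing that every genuine frontier voxel is marked, I would show that every voxel surviving into $\mathcal{F}_t$ has passed the current-state frontier predicate, so neither a spurious voxel nor a stale one can remain. The starting observation is that a voxel can enter $\mathcal{F}_t$ through exactly two routes. Either it is freshly extracted during iteration $t$ by \emph{Extract\_Frontier}, which is invoked only after the explicit test ``\emph{voxel} is a new frontier voxel'' (Alg.~\ref{Algorithm_1}, line~23) or, inside the BFS, after ``\emph{voxel} is a frontier voxel'' (Alg.~\ref{Algorithm_2}, line~9); or it is a member of $\mathcal{F}_{t-1}$ that was not deleted and is carried over by \emph{Process\_Frontier} (Alg.~\ref{Algorithm_1}, line~40). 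I would treat these two routes as the two cases for $v$.

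The first route handles spurious marking and is immediate. Both code paths that add a voxel to \emph{frontierSet} or \emph{newFrontier} evaluate the frontier predicate against the current map immediately before marking. Since $v$ is, by hypothesis, not a frontier voxel at iteration $t$, this predicate fails and $v$ is never freshly extracted. Hence F$^3$D never creates a false positive in the current iteration.

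The second route is the substantive one: $v$ was a valid frontier voxel at some earliest iteration $\tau < t$ and has since ceased to be one. I would argue that the frontier status of $v$ can change only if $v$'s own occupancy changes or one of its unknown neighbors becomes known; by the coverage fact used in Lemma~1 (state changes occur only within $\mathcal{V}(O^t_{t-1})$), both require $v$ or a voxel adjacent to $v$ to lie in $\mathcal{V}(O^t_{t-1})$. Consequently the frontier cluster containing $v$ intersects $\mathcal{S}_t$, so it is moved into \emph{deleteSet} and struck from $\mathcal{F}_{t-1}$ and \emph{frontierSet} --- either directly by the intersection test (Alg.~\ref{Algorithm_1}, lines~3--11) or through an adjacent scanned voxel by the neighbor test (lines~18--22). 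The recheck loop (lines~33--39) then re-extracts a frontier from each \emph{deleteSet} voxel only when it still satisfies the frontier predicate, so $v$, being a non-frontier at $t$, is dropped and does not re-enter $\mathcal{F}_t$. Combining both cases shows that $\mathcal{F}_t$ holds only voxels that pass the current predicate, which establishes soundness.

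The step I expect to be the main obstacle is establishing that the two removal mechanisms jointly cover every flipped voxel, in particular the fringe case in which $v$ itself lies in no new FOV but an unknown neighbor of $v$ has just been observed, so that $v$ is reachable only through the neighbor test of lines~18--22 rather than the direct intersection test of lines~3--11. I would discharge this by applying the Lemma~1 coverage fact to the changed neighbor to guarantee it is scanned, and then checking that the conjunct ``any \emph{voxel} $\in {\rm{f}}$ is changed'' together with the \emph{deleteSet} recheck never leaves such a voxel behind; verifying this corner case, rather than the straightforward marking argument, is where the real work lies.
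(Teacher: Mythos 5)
Your proposal is correct and follows essentially the same route as the paper's proof: the paper also splits into two cases (whether $v$ was a frontier voxel at iteration $t-1$ or not), argues that fresh marking cannot occur because the frontier predicate is evaluated against the current map before any extraction, and that stale frontiers are purged via the intersection test of lines~3--11 together with the neighbor test of lines~18--22 and the \emph{deleteSet} recheck. The fringe case you single out --- $v$ lying outside every new FOV while its only unknown neighbor is observed --- is exactly the ``rare case'' the paper identifies and discharges with the same lines~18--22 mechanism, so your reorganization by entry route into $\mathcal{F}_t$ is only a cosmetic variant of the published argument.
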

\begin{proof}
	Assuming that $v$ is an arbitrary voxel at iteration $t$, i.e., $v$ is not a frontier voxel at iteration $t$. Then, according to our definition, $v$ is either not a free voxel or $v$ is a free voxel but all of its 6-connected neighbors are not unknown voxels. We have two cases that need to be considered:

{\bf{Case 1.} $v$ was a frontier voxel at iteration $t-1$.} If $v$ was a frontier voxel at iteration $t-1$, then it could be covered by the sensor (i.e., it might intersecting with one of the FOVs of the sensor). Therefore, $v$ will be rechecked and removed from the frontier database (Alg. \ref{Algorithm_1}, lines lines 3-11, 33-39). However, there may be a rare case, i.e., when $v$ is close to but outside the FOVs of the sensor and the sensor covered its only unknown neighbor. In this special case, $v$ will not be a frontier voxel anymore. Fortunately, F$^3$D will check the 6-connected neighbors of each voxel inside the FOVs of the sensor, and thus this special case can be handled (Alg. \ref{Algorithm_1}, lines 18-22).

{\bf{Case 2.} $v$ was not a frontier voxel at iteration $t-1$.} If $v$ is not covered by $O^t_{t-1}$, then F$^3$D will not scan it and therefore will not mark it as a frontier voxel. If $v$ is a voxel that is covered by $O^t_{t-1}$, it will be checked and will not be marked as a frontier (Alg. \ref{Algorithm_1}, lines 13-31).
\end{proof}

\begin{figure}[!t]
	\centering
	\includegraphics[width=1.0\hsize]{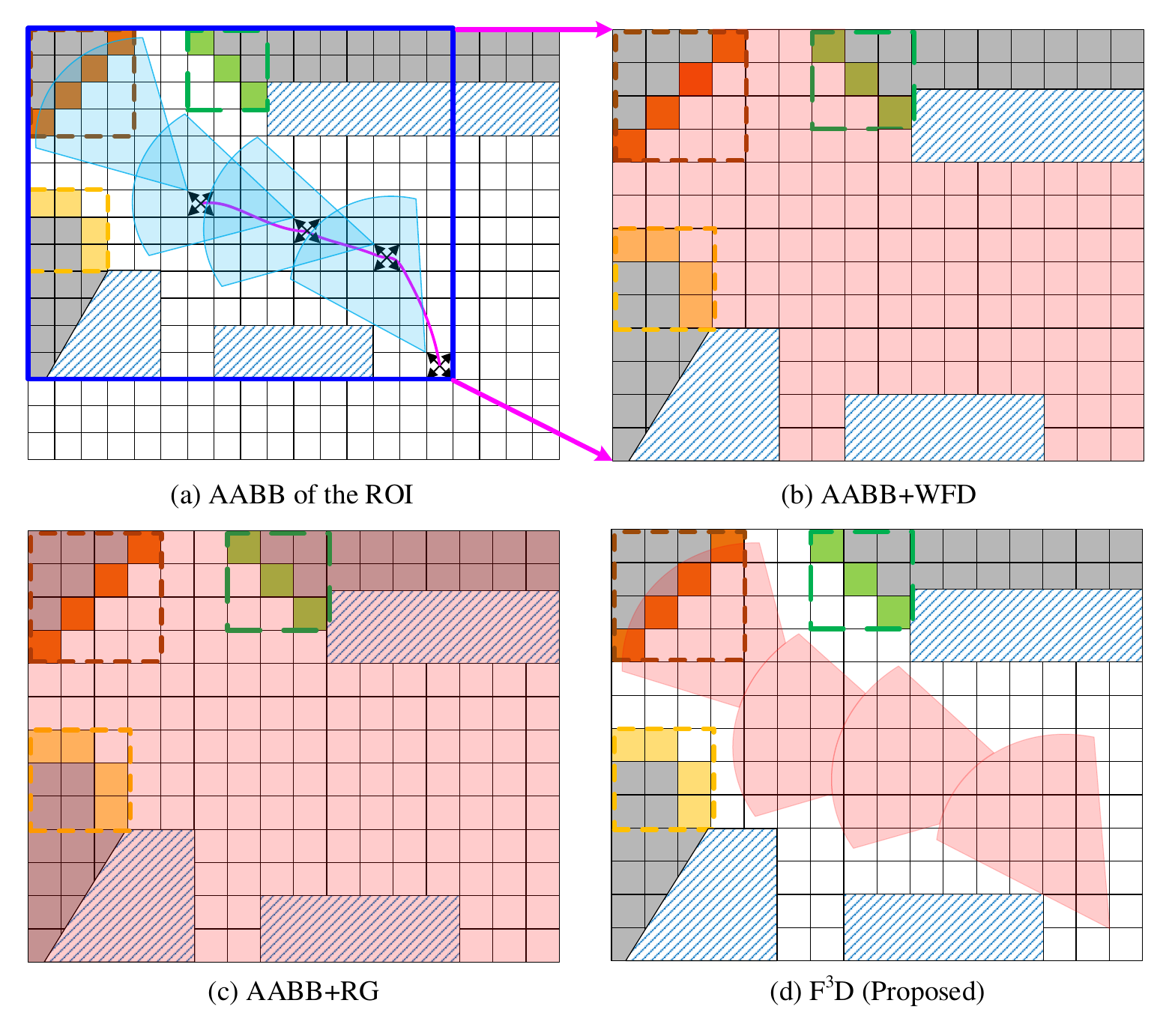} \vspace{-0.5cm}
	\caption{A comparison of different frontier detection strategies. (a) Illustration of the AABB (the blue rectangle) of the sensor covered region recorded in the duration of two iterations. (b) The AABB+WFD method \cite{IJRR2014} scans all free voxels in the AABB. (c) The AABB+RG method \cite{Zhou_2021} scans all voxels in the AABB. (d) F$^3$D only scans voxels in the FOVs of the sensor.}
    \label{frontier_detection_comparison}
\end{figure}

\subsubsection{Computational Complexity}\label{sub_sub_section:Computational-Complexity}
For ease of computation, we use a AABB to estimate the region covered by each FOV of the sensor. Let $n \in \mathbb{Z}^+$ be the number of the recorded FOVs for frontier detection and $m \in \mathbb{Z}^+$ be the number of frontiers in $\mathcal{F}_{t-1}$. Since we use the standard C++ list template as the container to store frontiers, the time for removing one frontier from $\mathcal{F}_{t-1}$ is $\mathcal{O}$ ($1$). Therefore, the lines 3-11 of Alg. \ref{Algorithm_1} runs in $\mathcal{O}$ ($mn$) time. After that, F$^3$D begins detecting new frontiers (Alg. \ref{Algorithm_1}, lines 13-26). Within each call to F$^3$D, it scans every voxel in the recorded FOVs for frontier voxels. Let $k \in \mathbb{Z}^+$ be the total number of voxels in the ROI and the time complexity for detecting new frontiers is $\mathcal{O}$ ($6k$) due to our connectivity definition. As can be seen in Fig. \ref{frontier_detection_comparison}, the voxels that need to be checked for frontier detection by \cite{Zhou_2021} and \cite{IJRR2014} are significantly more than ours. Intuitively, AABB+WFD scans less voxels than AABB+RG and its time complexity should be supposed to less than AABB+RG. However, according to the experimental results (see Section \ref{section:Experiments}), the conclusion is opposite. This is because the AABB+WFD method maintains four container called \emph{Map-Open-List}, \emph{Map-Close-List}, \emph{Frontier-Open-List}, and \emph{Frontier-Close-List} to achieve non-repetitive detection \cite{IJRR2014}. In contrast, our method only maintains two such container and the magnitude of data is more less than AABB+WFD, i.e., only overlapping areas need to be handled. Then, if a voxel in the ROI is identified as a new frontier voxel, the BFS will be invoked to extract all voxels that belonging to one frontier (Alg. \ref{Algorithm_2}). According to \cite{BFS2001}, we know that the complexity is linear in size of the number of new frontier voxels, i.e., $\mathcal{O}$ ($N(new\--frontier\--voxels)$).Also, let $h \in \mathbb{Z}^+$ be the voxels in \emph{deleteSet}, the lines 28-34 of Alg. \ref{Algorithm_1} takes $\mathcal{O}$ ($6h$) time. Finally, the time complexity of F$^3$D is $\mathcal{O}$ $(mn + 6(k + h) + N(new-frontier-voxels))$, which is close to the linear time complexity.

\section{Evaluation Results}\label{section:Experiments}

\subsection{Implementation Details}\label{sub_section:Implementation-Details}
In order to evaluate our method (FSMP) thoroughly, we conduct both simulation and real-world experiments in the context of an MAV. Fig. \ref{robot_platform} shows the robot platforms used in the simulation and real world, respectively. 

For the simulations, all algorithms are implemented by C++ on an OMEN9 SLIM laptop that runs Linux Ubuntu 20.04 LTS operation system with Intel Core i9-13900HX CPU at 5.4 GHz, 16-GB memory. We choose Gazebo \cite{gazebo} as the simulation engine since it provides realistic environments and robot models. The Firefly MAV provided by RotorS \cite{Furrer2016} (see Fig. \ref{robot_platform}(a)) is spawned in Gazebo to explore unknown 3-D environments. In our setup, the specifications of the VI-Sensor mounted on Firefly MAV have a sensing range of $\left[ {0.5,5} \right]$$m$ and a field of view of $\left[ 110, 90 \right]^\circ$ in horizontal and vertical directions. Unless specified, otherwise, the rest of parameters of our algorithm used in the simulation are listed in Table \ref{table_params}.

For the real-world experiments, we run all algorithms on an Intel Core i7-1260P CPU. The custom-built MAV used in the real-world experiment is shown in Fig. \ref{robot_platform}(b). We localize the MAV by using a Livox Mid 360 LiDAR and build the volumetric map of the environment using data captured by the Intel Realsense D435i depth camera. The specifications of D435i have a sensing range of $\left[ {0.3,5} \right]$$m$ and a field of view of $\left[ 87,58 \right]^\circ$ in horizontal and vertical directions. The other parameters of the exploration algorithm used in the experiment are the same as in the simulations.

\begin{figure}[!t]
	\centering
	\includegraphics[width=0.95\hsize]{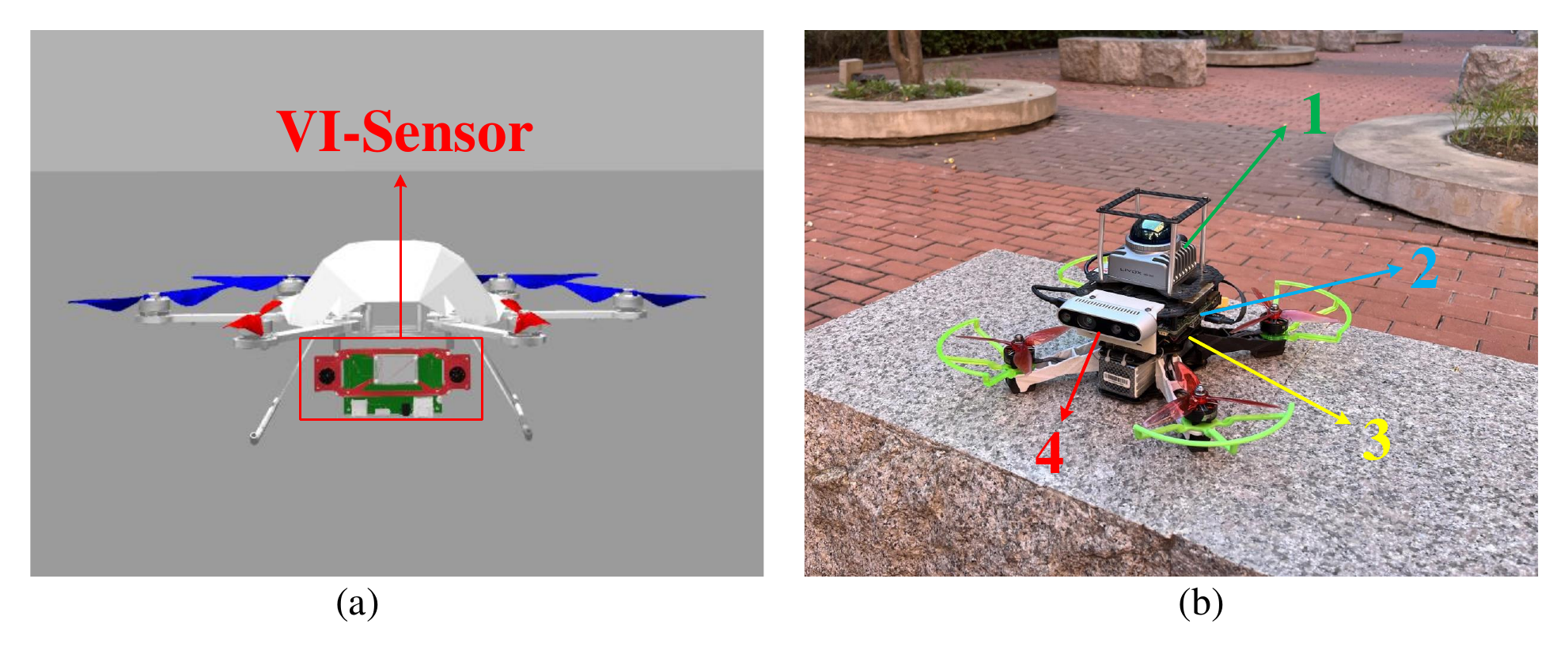}\vspace{-0.1cm}
	\caption{The robot platforms that used in the simulation and real-world experiments. (a) The simulated MAV, Firefly, is equipped with a forward-looking depth camera (i.e., VI-Sensor). (b) The real MAV is equipped with 1) a Livox Mid 360 LiDAR, 2) an onboard computer, 3) a PIXHAWK autopilot, and 4) a forward-looking depth camera.}
	\label{robot_platform}
\end{figure}

\begin{table}[!t]\centering
\caption{Parameters used in the simulation and experiment} \label{table_params}
\begin{tabular}{|c|c||c|c|}
\hline
\textbf{Parameter} & \textbf{Value} & \textbf{Parameter }& \textbf{Value}\\
\hline
${d_{min}}$ & 0.5 & $l_x$ & $0.8$ \\
\hline
${d_{max}}$ & 1.5 & $l_y$ & $0.8$ \\
\hline
${\lambda}$ & 0.5 & $l_z$ & $0.8$\\
\hline
\end{tabular}
\end{table}

\subsection{Simulation Study}\label{sub_section:Simulation-Experiments}
\begin{figure*}[!t]
	\centering
	\includegraphics[width=0.95\hsize]{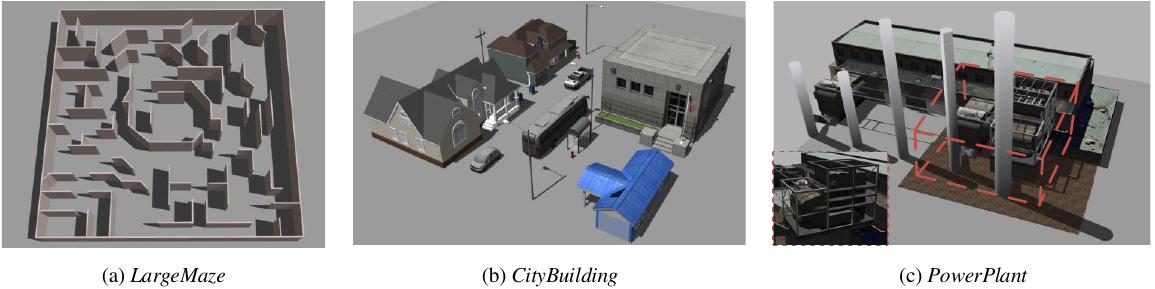}
	\caption{Three complex and large 3-D environments of increasing dimension and complexity are used in the simulation study. (a) The \emph{LargeMaze} environment is with the size of 40 $\times$ 40 $\times$ 3 $m^3$. (b) The \emph{CityBuilding} environment is with the size of 40 $\times$ 40 $\times$ 10 $m^3$. (c) The \emph{PowerPlant} environment is cropped to smaller dimensions (the part enclosed by the dotted box) and the size of the region of interest is 31 $\times$ 31 $\times$ 26 $m^3$.}
    \label{simulation_environments}
\end{figure*}
In this section, we evaluate our proposed planner in three complex and large 3-D synthetic environments. As shown in Fig. \ref{simulation_environments}, the environments involved in the simulation study are named as \emph{LargeMaze} (see Fig. \ref{simulation_environments}(a)), \emph{CityBuilding} (see Fig. \ref{simulation_environments}(b)) and \emph{PowerPlant} (see Fig. \ref{simulation_environments}(c)) respectively. 

First of all, to verify the efficiency of the proposed frontier detector, F$^3$D, we compare it with AABB+RG \cite{Zhou_2021} and AABB+WFD \cite{IJRR2014} in the simulation trials. The average values and standard deviations of the computation time of different frontier detectors are shown in Table \ref{table_frontier_detector_comparison}. The results indicate that F$^3$D achieves shorter time for frontier detection and smaller time variance in all environments. More specifically, F$^3$D is one order of magnitude faster than AABB+WFD and several times faster than AABB+RG. In addition, the performance of F$^3$D is consistently satisfactory in all environments, which indicates F$^3$D has a good scalability in different environments. It is worth noting that all the above frontier detectors are complete and sound, therefore, the accuracy of these detected frontiers is identical according to our tests.
\begin{table}[!t]\centering
\caption{Computation time of different frontier detectors} \label{table_frontier_detector_comparison}
\begin{tabular}{|c|clclcl|}
\hline
\multirow{3}{*}{\textbf{Scenario}} & \multicolumn{6}{c|}{\textbf{Computation Time} (ms)}                                                                                                                               \\ \cline{2-7}
                       & \multicolumn{2}{c|}{AABB+RG{\cite{Zhou_2021}}}                   & \multicolumn{2}{c|}{AABB+WFD{\cite{IJRR2014}}}                    & \multicolumn{2}{c|}{F$^3$D (Ours)}                               \\ \cline{2-7}
                       & \multicolumn{1}{c|}{Avg}   & \multicolumn{1}{c|}{Std}   & \multicolumn{1}{c|}{Avg}   & \multicolumn{1}{c|}{Std}  & \multicolumn{1}{c|}{Avg}           & \multicolumn{1}{c|}{Std} \\ \hline
\emph{LargeMaze}                   & \multicolumn{1}{c|}{15.5} & \multicolumn{1}{c|}{11.7} & \multicolumn{1}{c|}{53.1}  & \multicolumn{1}{c|}{51.6} & \multicolumn{1}{c|}{\textbf{4.3}} & \textbf{6.5}            \\ \hline
\emph{CityBuilding}                  & \multicolumn{1}{c|}{87.2} & \multicolumn{1}{c|}{54.7} & \multicolumn{1}{c|}{313.8}  & \multicolumn{1}{c|}{254.7} & \multicolumn{1}{c|}{\textbf{38.9}} & \textbf{35.6}            \\ \hline
\emph{PowerPlant}                  & \multicolumn{1}{c|}{55.3} & \multicolumn{1}{c|}{42.6} & \multicolumn{1}{c|}{155.0} & \multicolumn{1}{c|}{168.9} & \multicolumn{1}{c|}{\textbf{21.3}} & \textbf{35.0}            \\ \hline
\end{tabular}
\end{table}

We also compare our planner, FSMP, with three existing exploration methods which are tailored for MAVs equipped with depth-cameras. All the methods are implemented using their open-source code adapted to specific simulated environments.

\begin{itemize}
    \item \emph{IPP} \cite{Schmid_2020}: A sampling-based method that consistently span a single RRT* tree in the known free regions to achieve the global optimal exploration of the environment.
    
    \item \emph{FUEL} \cite{Zhou_2021}: A frontier-based method that computes the optimal sequence to visit all frontiers by addressing TSP. After that, it computes the minimum-time trajectory for fast exploration.

    \item \emph{FSample} \cite{Respall_2021}: A hybrid method that combines the Next-Best-View sampling and frontier-based strategies in a unified framework to achieve global exploration.
\end{itemize}

\emph{Remark 2:} It is worth noting that IPP is a pure sampling-based method, FUEL is a pure frontier-based method, and FSample is a hybrid method that combines both the sampling-based and frontier-based ideas. The reason we choose these three methods to verify our method is that they are very popular algorithms and each of them can be approximately viewed as a ablation case of our planner. 

Note that, in order to achieve a statistical comparison, we run all methods 5 times in each environment using the same initial configurations. Additionally, we record the statistics following each run and summarize all the comparative results in Table \ref{table_results_statistic}. These results include the {{Exploration Time}} when the explored volume reaches 90\% of the total space, the {{Traveling Distance}} when the explored volume reaches 90\% of the total space, and the {{Max Explored Volume}} when the time limitation (set as three times of our method) is reached.

\begin{figure}[!t]
	\begin{minipage}[!htp]{\linewidth}
		\centering
		\subfigure[Mapping Result and Executed Trajectories]{
			\label{LM_a}
			\includegraphics[width=0.9\hsize]{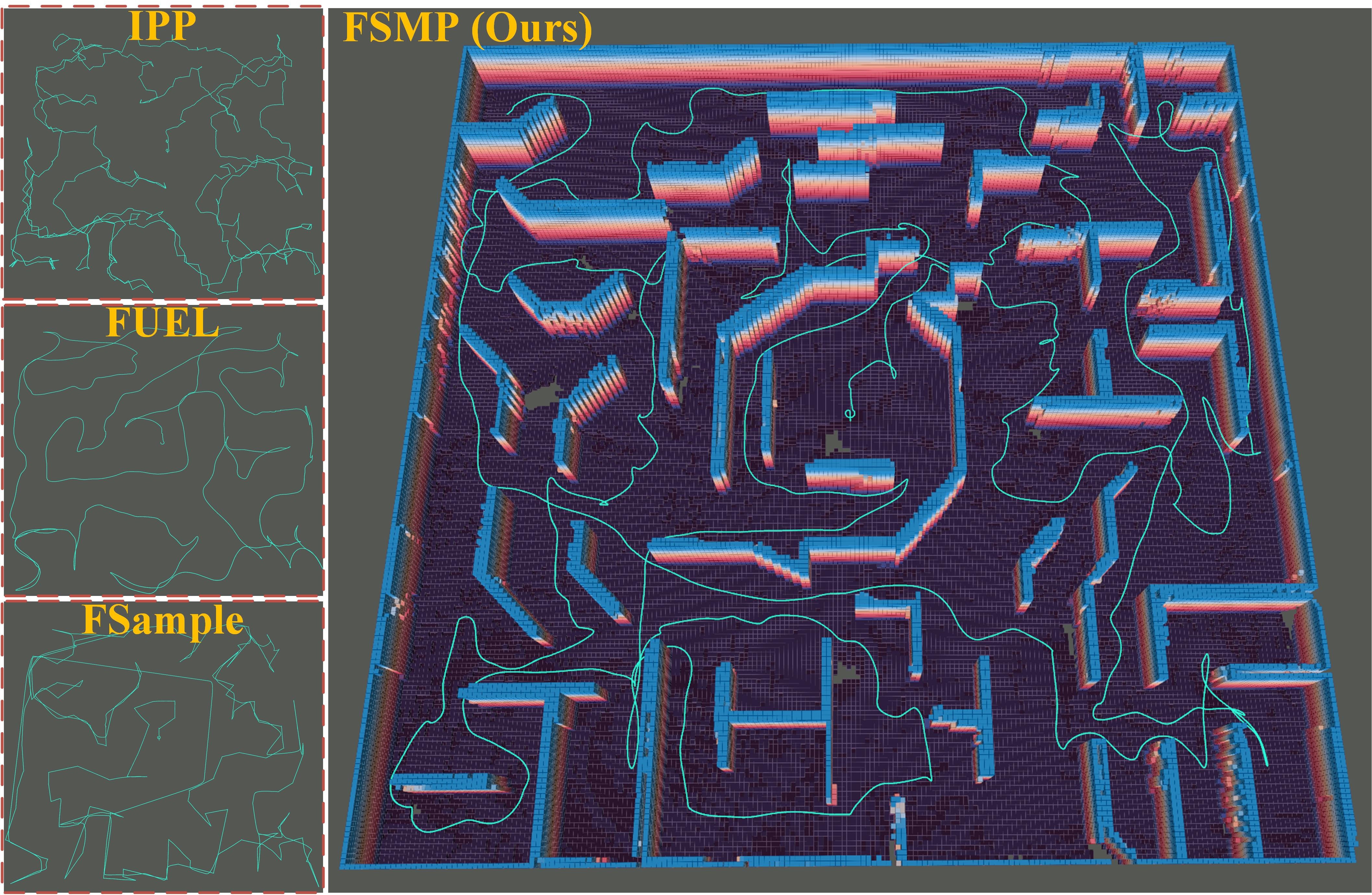}}
		\subfigure[Exploration Progresses]{
			\label{LM_b}
			\includegraphics[width=0.9\hsize]{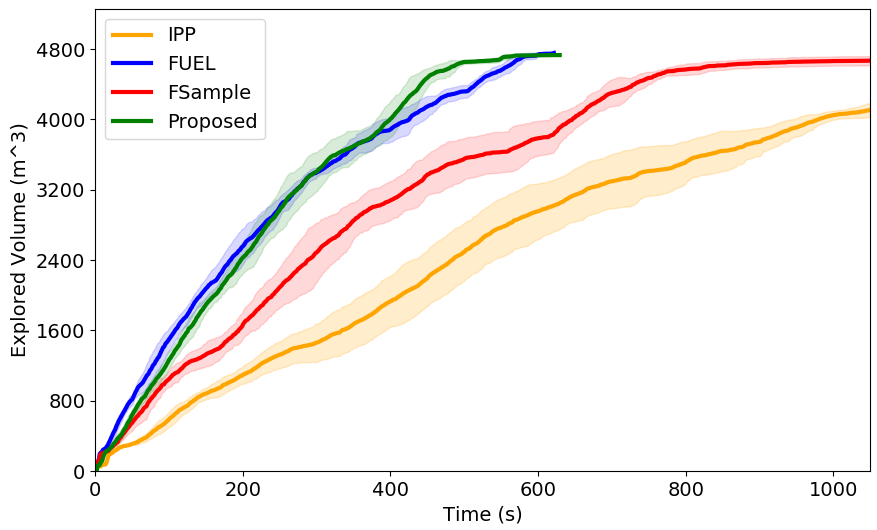}}
		\caption{The exploration results of the \emph{LargeMaze} environment. (a) shows the volumetric mapping result of our method and the executed trajectories of IPP, FUEL, FSample, and the proposed method from their representative runs. (b) shows the exploration progress of these four methods.}
		\label{LM_statistic}
	\end{minipage}
\end{figure}
The first trial uses the MAV to explore the \emph{LargeMaze} environment. This scenario is typically used to evaluate exploration methods and we also use it to benchmark the performance of our method. In this trial, we set the map resolution to $0.2$ $m$. The maximum linear velocity, maximum linear acceleration and maximum angular velocity of the MAV are set to 1.0 $m/s$, 1.0 $m/{s^2}$ and 1.0 $rad/s$, respectively. Fig. \ref{LM_statistic}(a) shows the volumetric mapping result of our method and the executed trajectories of each method. Table \ref{table_results_statistic} gives the statistics of these methods in the \emph{LargeMaze} environment. Our method can complete 90\% exploration of the environment using the shortest traveling distance (363.4$m$) and flight time (429.3$s$) on average. As shown in Table \ref{table_results_statistic}, our method can achieve 98.5\% coverage of the environment on average when it stops to explore. Fig. \ref{LM_statistic}(b) shows the average explored volume over the time of different methods. Our method improves the efficiency by more than 9.3\% for the first 90\% coverage, compared to all other methods. It is worth noting that FUEL, which is known as the state-of-the-art in the exploration field, demonstrated performance comparable to our method in this trial. The reason is that, in this scenario, the number of frontiers is relatively small, and thus the cost of solving TSP is low. The computational time used for each planning iteration of FUEL can be found in Table \ref{table_runtime_comparison}.

\begin{table*}[!t]\centering
	\caption{Statistical results of different methods with the same simulation configurations} \label{table_results_statistic}
	\begin{tabular}{|c|c|cccc|cccc|cccc|}
		\hline
		\multirow{2}{*}{\textbf{Scenario}} & \multirow{2}{*}{\textbf{Method}} & \multicolumn{4}{c|}{\textbf{Exploration Time} (s)} & \multicolumn{4}{c|}{\textbf{Traveling Distance} (m)}  
		& \multicolumn{4}{c|}{\textbf{Max Explored Volume} (m$^3$)}  \\ \cline{3-14} &                                  
		& \multicolumn{1}{c|}{\textbf{Avg}}    & \multicolumn{1}{c|}{\textbf{Std}}   & \multicolumn{1}{c|}{\textbf{Max}}    & \textbf{Min}    
		& \multicolumn{1}{c|}{\textbf{Avg}}    & \multicolumn{1}{c|}{\textbf{Std}}  & \multicolumn{1}{c|}{\textbf{Max}}    & \textbf{Min}    
		& \multicolumn{1}{c|}{\textbf{Avg}}    & \multicolumn{1}{c|}{\textbf{Std}}   & \multicolumn{1}{c|}{\textbf{Max}}    & \textbf{Min}    \\ \hline
		\multirow{4}{*}{\emph{LargeMaze}}      
		& IPP                               & \multicolumn{1}{c|}{1220.5}          & \multicolumn{1}{c|}{211.8}           & \multicolumn{1}{c|}{1420.5}          & 1167.0
		& \multicolumn{1}{c|}{513.4}           & \multicolumn{1}{c|}{43.4}            & \multicolumn{1}{c|}{578.5}           & 470.6          
		& \multicolumn{1}{c|}{4533.6}          & \multicolumn{1}{c|}{113.5}           & \multicolumn{1}{c|}{4651.3}          & 4372.0          \\ \cline{2-14}
		
		& FUEL                              & \multicolumn{1}{c|}{469.5}               & \multicolumn{1}{c|}{39.1}          & \multicolumn{1}{c|}{517.5}            & 460.0          
		& \multicolumn{1}{c|}{406.6}               & \multicolumn{1}{c|}{25.4}          & \multicolumn{1}{c|}{436.4}            & 369.5 
		& \multicolumn{1}{c|}{\textbf{4755.8}}     & \multicolumn{1}{c|}{15.4}          & \multicolumn{1}{c|}{\textbf{4766.9}}  & \textbf{4728.6}          \\ \cline{2-14}
		
		& FSample                           & \multicolumn{1}{c|}{692.0}          & \multicolumn{1}{c|}{51.9}             & \multicolumn{1}{c|}{735.5}         & 666.0          
		& \multicolumn{1}{c|}{461.3}          & \multicolumn{1}{c|}{\textbf{21.5}}    & \multicolumn{1}{c|}{486.2}         & 443.9         
		& \multicolumn{1}{c|}{4703.7}         & \multicolumn{1}{c|}{34.6}             & \multicolumn{1}{c|}{4750.2}        & 4672.9           \\ \cline{2-14}
		
		& Ours                                   & \multicolumn{1}{c|}{\textbf{429.3}}    & \multicolumn{1}{c|}{\textbf{18.6}}    & \multicolumn{1}{c|}{\textbf{451.1}}  & \textbf{398.6} 
		& \multicolumn{1}{c|}{\textbf{363.4}}    & \multicolumn{1}{c|}{22.6}             & \multicolumn{1}{c|}{\textbf{386.5}}  & \textbf{329.3}          
		& \multicolumn{1}{c|}{4731.3}            & \multicolumn{1}{c|}{\textbf{5.5}}     & \multicolumn{1}{c|}{4741.0}          & 4728.1  \\ \hline
		
		\multirow{4}{*}{\emph{CityBuilding}}         
		& IPP                               & \multicolumn{1}{c|}{-}                & \multicolumn{1}{c|}{-}           & \multicolumn{1}{c|}{-}          & -          
		& \multicolumn{1}{c|}{-}                & \multicolumn{1}{c|}{-}           & \multicolumn{1}{c|}{-}          & -        
		& \multicolumn{1}{c|}{11153.7}          & \multicolumn{1}{c|}{214.4}       & \multicolumn{1}{c|}{11366.6}    & 10856.8          \\ \cline{2-14}
		
		& FUEL                              & \multicolumn{1}{c|}{811.0}             & \multicolumn{1}{c|}{79.5}             & \multicolumn{1}{c|}{856.0}          & 775.5         
		& \multicolumn{1}{c|}{907.6}             & \multicolumn{1}{c|}{\textbf{51.9}}    & \multicolumn{1}{c|}{958.6}          & 848.7
		& \multicolumn{1}{c|}{12356.7}           & \multicolumn{1}{c|}{166.0}            & \multicolumn{1}{c|}{12562.2}        & 12188.3          \\ \cline{2-14}
		
		& FSample                           & \multicolumn{1}{c|}{1114.5}            & \multicolumn{1}{c|}{323.7}            & \multicolumn{1}{c|}{1286.5}            & 876.5          
		& \multicolumn{1}{c|}{882.6}             & \multicolumn{1}{c|}{109.6}            & \multicolumn{1}{c|}{1010.6}            & 728.5          
		& \multicolumn{1}{c|}{12474.3}           & \multicolumn{1}{c|}{272.1}            & \multicolumn{1}{c|}{12752.9}           & 12099.9          \\ \cline{2-14}
		
		& Ours                                    & \multicolumn{1}{c|}{\textbf{513.7}}     & \multicolumn{1}{c|}{\textbf{52.1}}   & \multicolumn{1}{c|}{\textbf{594.3}}    & \textbf{460.5} 
		& \multicolumn{1}{c|}{\textbf{609.2}}     & \multicolumn{1}{c|}{90.4}            & \multicolumn{1}{c|}{\textbf{733.6}}    & \textbf{535.2}        
		& \multicolumn{1}{c|}{\textbf{13349.8}}   & \multicolumn{1}{c|}{\textbf{119.0}}  & \multicolumn{1}{c|}{\textbf{13494.9}}  & \textbf{13167.2}  \\ \hline
		
		\multirow{4}{*}{\emph{PowerPlant}}           
		& IPP                               & \multicolumn{1}{c|}{-}          & \multicolumn{1}{c|}{-}          & \multicolumn{1}{c|}{-}          & -          
		& \multicolumn{1}{c|}{-}          & \multicolumn{1}{c|}{-}          & \multicolumn{1}{c|}{-}          & -          
		& \multicolumn{1}{c|}{16449.4}    & \multicolumn{1}{c|}{449.8}      & \multicolumn{1}{c|}{17161.5}    & 16065.3          \\ \cline{2-14}
		
		& FUEL                              & \multicolumn{1}{c|}{1539.8}          & \multicolumn{1}{c|}{130.5}          & \multicolumn{1}{c|}{1769.0}          & 1403.5          
		& \multicolumn{1}{c|}{1295.6}          & \multicolumn{1}{c|}{819.2}          & \multicolumn{1}{c|}{1376.3}          & 1219.2
		& \multicolumn{1}{c|}{19707.8}    & \multicolumn{1}{c|}{228.3}      & \multicolumn{1}{c|}{20077.7}    & 19518.3          \\ \cline{2-14}
		
		& FSample                           & \multicolumn{1}{c|}{1740.0}          & \multicolumn{1}{c|}{138.3}          & \multicolumn{1}{c|}{1809.5}          & 1626.0          
		& \multicolumn{1}{c|}{1509.1}          & \multicolumn{1}{c|}{66.7}           & \multicolumn{1}{c|}{1564.9}          & 1399.4          
		& \multicolumn{1}{c|}{19827.4}         & \multicolumn{1}{c|}{352.7}          & \multicolumn{1}{c|}{20280.3}         & 19466.7 \\ \cline{2-14}
		
		& Ours                                     & \multicolumn{1}{c|}{\textbf{686.1}}      & \multicolumn{1}{c|}{\textbf{16.8}}    & \multicolumn{1}{c|}{\textbf{709.7}}     & \textbf{662.2} 
		& \multicolumn{1}{c|}{\textbf{791.2}}      & \multicolumn{1}{c|}{\textbf{32.1}}    & \multicolumn{1}{c|}{\textbf{827.4}}     & \textbf{748.6}          
		& \multicolumn{1}{c|}{\textbf{21093.9}}    & \multicolumn{1}{c|}{\textbf{69.9}}    & \multicolumn{1}{c|}{\textbf{21169.4}}   & \textbf{20984.9}         \\ \hline
	\end{tabular}
	\footnotesize{${-}$: denotes a method can not achieve 90\% coverage of the environment.\;\;\;\;\;\;\;\;\;\;\;\;\;\;\;\;\;\;\;\;\;\;\;\;\;\;\;\;\;\;\;\;\;\;\;\;\;\;\;\;\;\;\;\;\;\;\;\;\;\;\;\;\;\;\;\;\;\;\;\;\;\;\;\;\;\;\;\;\;\;\;\;\;\;\;\;\;\;\;\;\;\;\;\;\;\;\;\;\;\;\;\;\;\;\;\;\;\;\;\;\;\;\;\;\;\;\;\;\;\;\;\;\;\;\;\;\;}
\end{table*}

\begin{figure}[!htp]
	\begin{minipage}[!htbp]{\linewidth}
		\centering
		\subfigure[Mapping Results and Executed Trajectories]{
			\label{CB_a}
			\includegraphics[width=0.95\hsize]{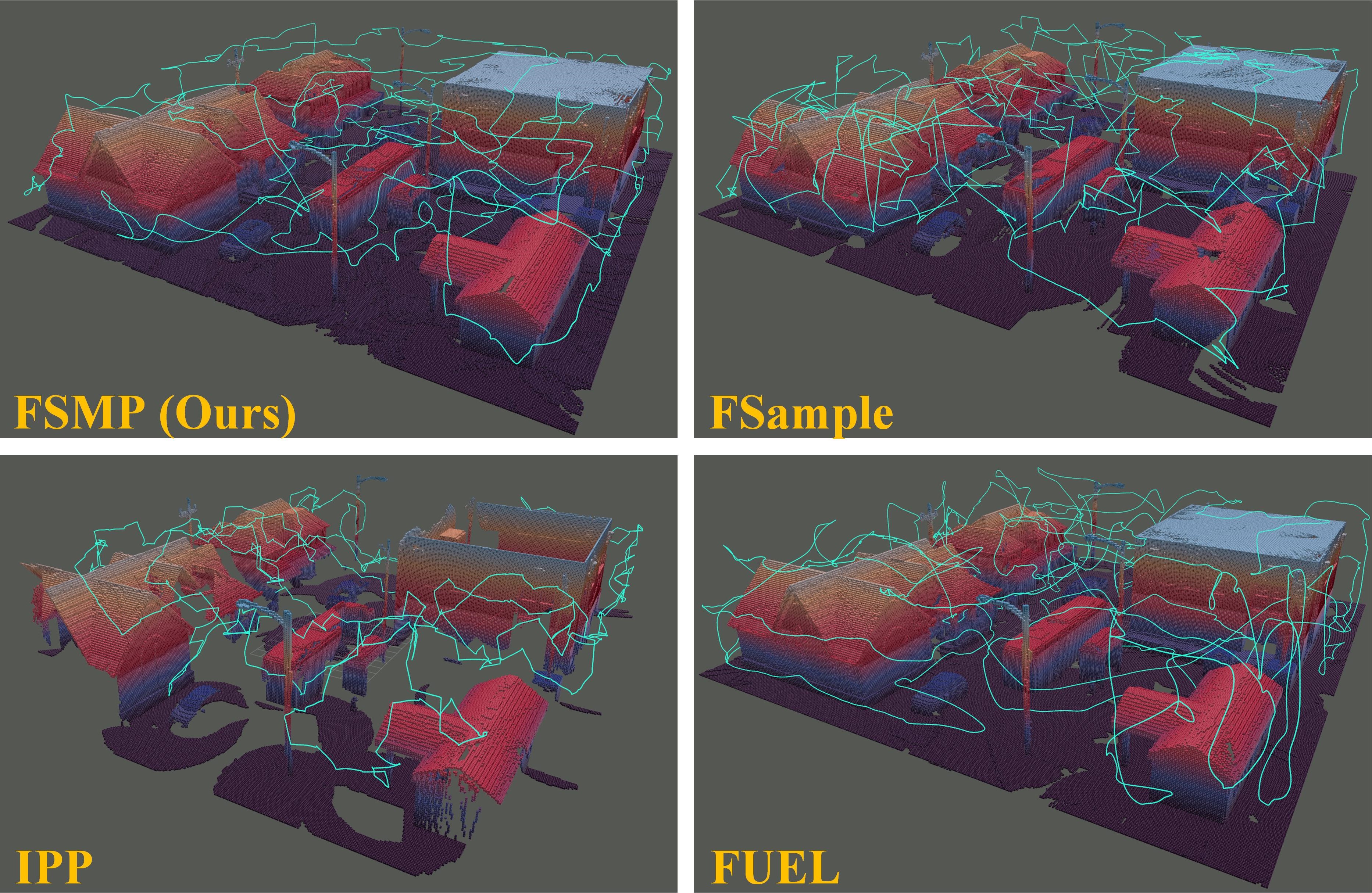}}
		\subfigure[Exploration Progresses]{
			\label{CB_b}
			\includegraphics[width=0.9\hsize]{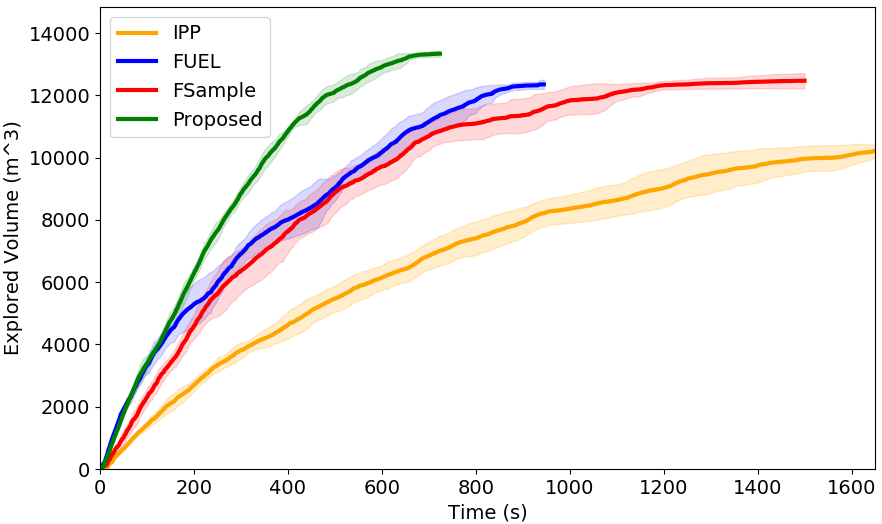}}
		\caption{The exploration results of the \emph{CityBuilding} environment. (a) shows the volumetric mapping result of the environment and trajectories of IPP, FUEL, FSample, and the proposed method from their representative runs. (b) shows the exploration progress of these four methods.}
		\label{CB_statistic}
	\end{minipage}
\end{figure}
The second trial employs the MAV to explore the \emph{CityBuilding} environment. Compared to \emph{LargeMaze}, the \emph{CityBuilding} scenario demonstrates complex geometrical changes along the z-axis, which can validate the basic performance of different exploration methods in real 3-D scenarios. In this trial, we set the map resolution to $0.1$ $m$. The maximum linear velocity, maximum linear acceleration and the angular velocity of the MAV are set to 2.0 $m/s$, 2.0 $m/{s^2}$ and 2.0 $rad/s$, respectively. Fig. \ref{CB_statistic}(a) shows the volumetric mapping results of the environment and the executed trajectories of the MAV. Fig. \ref{CB_statistic}(b) shows the exploration progresses of different methods over the time. Table \ref{table_results_statistic} gives the statistics of four methods in this scenario. It can be found that our method completes the first 90\% coverage of the environment using 609.2$m$ of travel distance and 513.7$s$ of flight time on average. Even the second scenario is more complex and larger than \emph{LargeMaze}, our method can achieve the largest coverage (98.9\% on average) of the environment when it stops to explore. In addition, our method improves the exploration efficiency by more than 58.1\% compared to the other methods. It can be found in Table \ref{table_runtime_comparison}, the computational time used for each planning iteration of FUEL is increased significantly. This is because, in large scenarios (especially open environments), the number of frontiers will increase dramatically, which will thus reduce the exploration efficiency of FUEL. In addition, the computational time of IPP is the longest, resulting in its inability to complete 90\% coverage of the environment within the maximum time limitation. 

\begin{figure}[!htp]
	\begin{minipage}[!htbp]{\linewidth}
		\centering
		\subfigure[Mapping Result and Executed Trajectory of FSMP]{
			\label{PP_a} 
			\includegraphics[width=0.9\hsize]{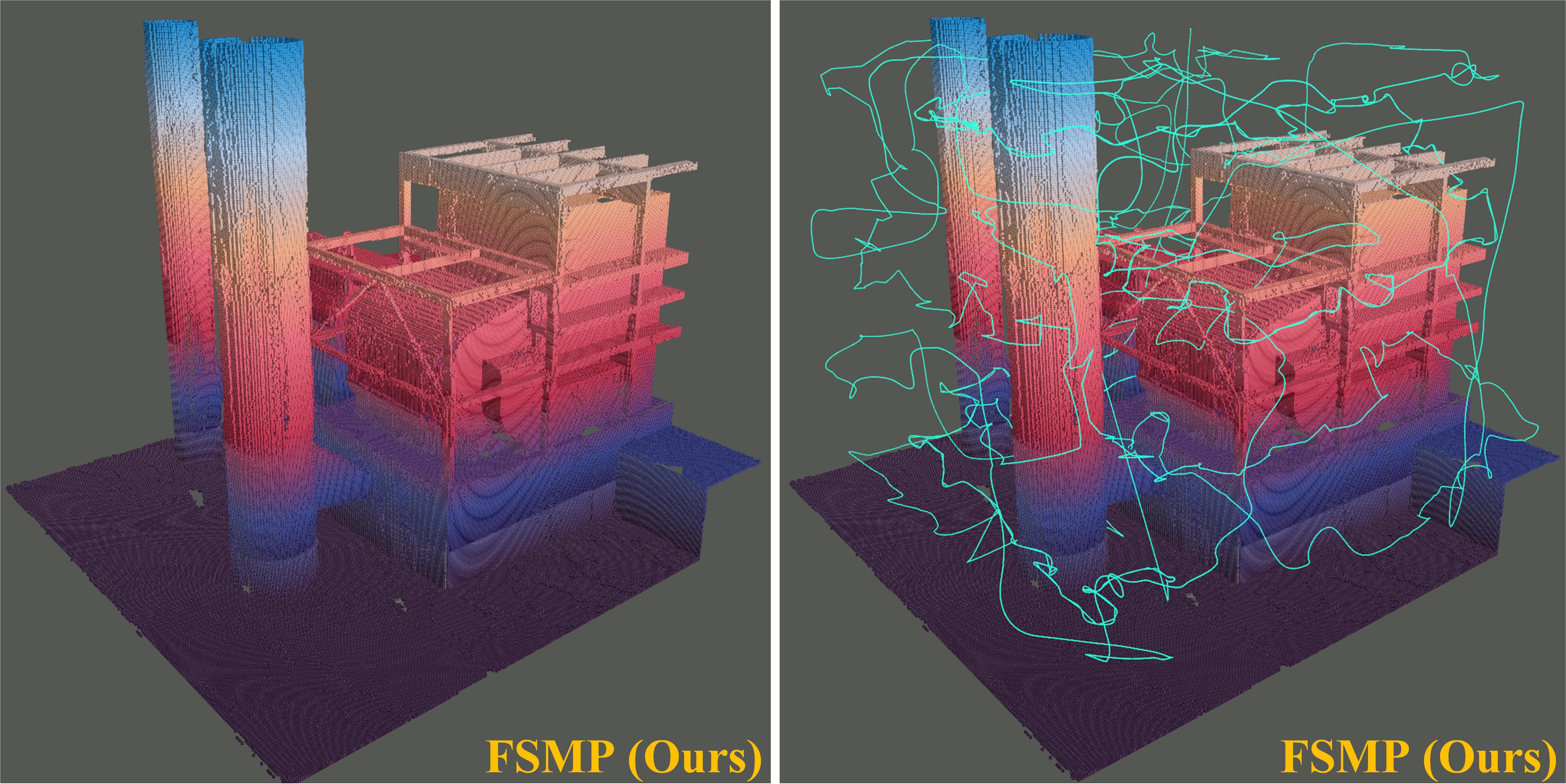}}
		\subfigure[Exploration Progresses]{
			\label{PP_b} 
			\includegraphics[width=0.95\hsize]{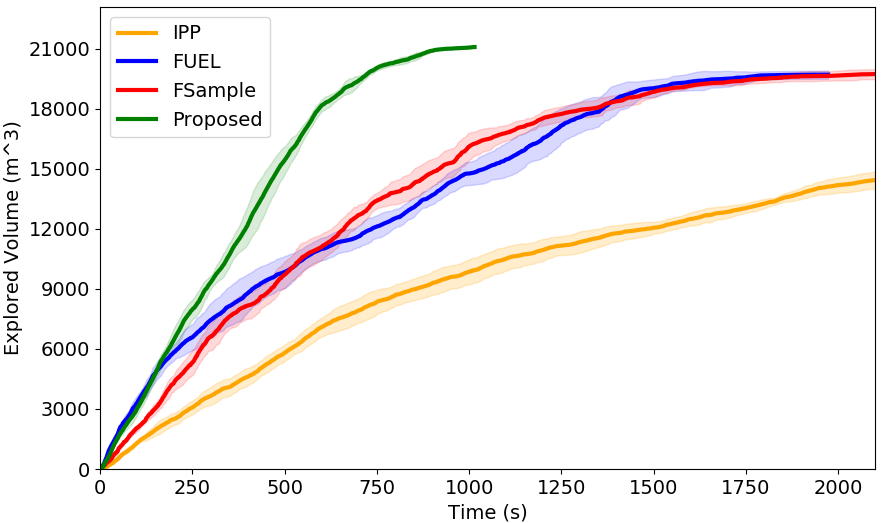}}
		\caption{The exploration results of the \emph{PowerPlant} environment. (a) shows the volumetric mapping result and the executed trajectory of the proposed method from its representative run. (b) shows the exploration progress of four methods.}
		\label{PP_statistic}
	\end{minipage}
\end{figure}
The last trial uses the MAV to explore the \emph{PowerPlant} environment. This scenario is the largest and most complex one of all scenarios. As shown in Fig. \ref{simulation_environments}(c), \emph{PowerPlant} features complex structures and narrow spaces to be explored. In this trial, we set the map resolution to $0.1$ $m$. The maximum linear velocity, maximum linear acceleration and the angular velocity of the MAV are set to 2.0 $m/s$, 2.0 $m/{s^2}$ and 2.0 $rad/s$, respectively. Fig. \ref{PP_statistic}(a) shows the volumetric mapping result and the executed trajectory of FSMP. Our method completes the first 90\% coverage of the environment using 609.2$m$ of travel distance and 791.2$s$ of flight time on average. Fig. \ref{PP_statistic}(b) shows the explored volume over the time of four methods. It can be found in Table \ref{table_results_statistic} that our method improves the exploration efficiency by more than 124.4\% compared to the other methods. In addition, IPP, FUEL, and FSample can not reach 95\% coverage of the environment within the maximum time limitation. However, our method can achieve 98.1\% coverage on average of the environment, demonstrating our method possesses good scalability. 

\begin{table}[!t]\centering
	\caption{Computation time used for each planning iteration} \label{table_runtime_comparison}
	\begin{tabular}{|c|c|cccc|}
		\hline
		\multirow{2}{*}{\textbf{Scenario}} & \multirow{2}{*}{\textbf{Method}} & \multicolumn{4}{c|}{\textbf{Time} (ms)}  \\ \cline{3-6} &                                  
		& \multicolumn{1}{c|}{\textbf{Avg}}    & \multicolumn{1}{c|}{\textbf{Std}}   & \multicolumn{1}{c|}{\textbf{Max}}    & \textbf{Min}     \\ \hline
		\multirow{4}{*}{\emph{LargeMaze}}      
		& IPP                               & \multicolumn{1}{c|}{1171.5}          & \multicolumn{1}{c|}{370.3}           & \multicolumn{1}{c|}{3106.9}          & 184.1         \\ \cline{2-6}
		
		& FUEL                              & \multicolumn{1}{c|}{481.0}               & \multicolumn{1}{c|}{498.0}          & \multicolumn{1}{c|}{2491.0}            & 4.1         \\ \cline{2-6}
		
		& FSample                           & \multicolumn{1}{c|}{817.6}          & \multicolumn{1}{c|}{2287.0}             & \multicolumn{1}{c|}{16933.1}         & 71.3              \\ \cline{2-6}
		
		& Proposed                          & \multicolumn{1}{c|}{\textbf{22.2}}    & \multicolumn{1}{c|}{\textbf{12.7}}    & \multicolumn{1}{c|}{\textbf{94.1}}  & \textbf{0.22}  \\ \hline
		
		\multirow{4}{*}{\emph{CityBuilding}}         
		& IPP                               & \multicolumn{1}{c|}{1904.3}                & \multicolumn{1}{c|}{507.9}           & \multicolumn{1}{c|}{4162.9}          & 485.2           \\ \cline{2-6}
		
		& FUEL                              & \multicolumn{1}{c|}{625.7}             & \multicolumn{1}{c|}{492.0}             & \multicolumn{1}{c|}{3003.1}          & 31.6           \\ \cline{2-6}
		
		& FSample                           & \multicolumn{1}{c|}{675.5}            & \multicolumn{1}{c|}{1544.4}            & \multicolumn{1}{c|}{13336.0}            & 97.5              \\ \cline{2-6}
		
		& Proposed                          & \multicolumn{1}{c|}{\textbf{107.7}}     & \multicolumn{1}{c|}{\textbf{97.1}}   & \multicolumn{1}{c|}{\textbf{380.8}}    & \textbf{0.45}  \\ \hline
		
		\multirow{4}{*}{\emph{PowerPlant}}           
		& IPP                               & \multicolumn{1}{c|}{1677.79}          & \multicolumn{1}{c|}{1727.84}          & \multicolumn{1}{c|}{5618.2}          & 173.8             \\ \cline{2-6}
		
		& FUEL                              & \multicolumn{1}{c|}{1469.6}          & \multicolumn{1}{c|}{1727.8}          & \multicolumn{1}{c|}{15265.1}          & 18.4              \\ \cline{2-6}
		
		& FSample                           & \multicolumn{1}{c|}{629.6}          & \multicolumn{1}{c|}{2333.4}          & \multicolumn{1}{c|}{35037.0}          & 4.3    \\ \cline{2-6}
		
		& Proposed                          & \multicolumn{1}{c|}{\textbf{164.5}}      & \multicolumn{1}{c|}{\textbf{126.9}}    & \multicolumn{1}{c|}{\textbf{562.9}}     & \textbf{0.21}         \\ \hline
	\end{tabular}
\end{table}

\subsection{Real-world Experiments}\label{sub_section:Real-world-Experiments}
To further validate our planner, we implement FSMP and FUEL, respectively, on a self-built quadrotor MAV (see Fig. \ref{robot_platform}(b)) to explore an outdoor environment with cluttered obstacles. As shown in Fig. \ref{real-world-scenes}(a), the environment to be explored is with a size of 20 $\times$ 26 $\times$ 3 $m$$^3$. For both FSMP and FUEL, the maximum linear velocity, maximum linear acceleration and the maximum angular velocity of the MAV are set to 2.0 $m/s$, 2.0 $m/{s^2}$ and 1.5 $rad/s$, respectively. The resolution of the volumetric map is set to 0.1 $m$. The LiDAR-based odometry, Point-LIO \cite{he2023point}, is employed in the experiments for the MAV pose estimation. Note that there are no external devices used for the real-world experiments and all algorithms are running on-board the MAV in real time. To achieve fair evaluation, we set the maximum time used for exploration to 120 $s$.
	
The volumetric mapping results and the executed trajectories of FSMP and FUEL are shown in Fig. \ref{real-world-scenes}(b) and (c), respectively. The explored volume over time of FSMP and FUEL are shown in Fig. \ref{real-world-scenes}(d). It can be seen that our method can complete the first 90\% coverage of the environment in 85.5 $s$, while FUEL takes 99.5 $s$, which is 16.3\% longer than our method. In addition, FSMP can achieve 96.0\% coverage (i.e., 1498.3 $m^3$ out of 1560 $m^3$) of the environment within the maximum time limit, whereas FUEL can only achieve 93.6\% coverage (i.e., 1459.9 $m^3$ out of 1560 $m^3$). 

Overall, the simulation and real-world results demonstrate the efficient and effective performance of our method in various environments.

\begin{figure}[!t]
	\centering
	\includegraphics[width=0.8\hsize]{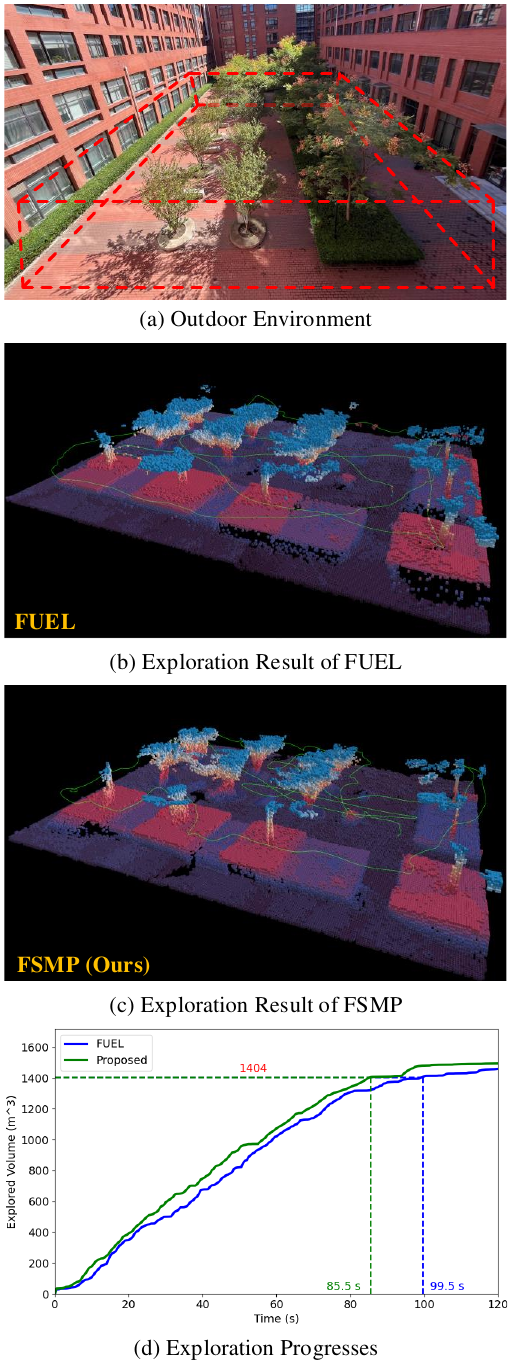}\vspace{-0.1cm}
	\caption{Real-world experiment in an outdoor environment with cluttered obstacles.}
	\label{real-world-scenes}
\end{figure}
\section{Conclusion}\label{section:Conclusion}
In this paper, a novel frontier-sampling-mixed exploration planner (FSMP) is developed by integrating advantages of the frontier-based and sampling-based strategies. According to the simulation studies, FSMP can achieve superior exploration performance over some state-of-the-art autonomous exploration methods in terms of exploration efficiency (more than 9.3\% faster in the benchmark environment and 124.4\% faster in the most complex and largest 3-D environment), computational time (several times faster at least), and explored volume (only our method achieved more than 98\% coverage of the last 3-D environment). Finally, the real-world experiments demonstrated that our planner is effective for exploring large and 3-D environments and can be executed in real-time on-board a self-developed MAV.

The main limitations of FSMP are twofold. First, we assume that the MAV can perfectly localize itself, as most methods do. However, state estimation errors are ubiquitous and can not be ignored, especially in real-world environments. Large state estimation errors can significantly degrade the exploration performance or even lead to task failure. Second, we assume that the environment to be explored is static. However, in practical scenarios, dynamic obstacles are frequently encountered, leading to insufficient coverage. In our future work, we plan to consider odometry drifts in FSMP to improve its robustness under localization uncertainty and devise specific strategies for addressing exploration problems in dynamic environments. Additionally, we are working on extending FSMP for fast exploration of large field environments with multiple MAVs.

\bibliographystyle{IEEEtran}

\bibliography{Bibliography/bibFile}

\begin{thebibliography}{10}
\providecommand{\url}[1]{#1}
\csname url@samestyle\endcsname
\providecommand{\newblock}{\relax}
\providecommand{\bibinfo}[2]{#2}
\providecommand{\BIBentrySTDinterwordspacing}{\spaceskip=0pt\relax}
\providecommand{\BIBentryALTinterwordstretchfactor}{4}
\providecommand{\BIBentryALTinterwordspacing}{\spaceskip=\fontdimen2\font plus
\BIBentryALTinterwordstretchfactor\fontdimen3\font minus
  \fontdimen4\font\relax}
\providecommand{\BIBforeignlanguage}[2]{{%
\expandafter\ifx\csname l@#1\endcsname\relax
\typeout{** WARNING: IEEEtran.bst: No hyphenation pattern has been}%
\typeout{** loaded for the language `#1'. Using the pattern for}%
\typeout{** the default language instead.}%
\else
\language=\csname l@#1\endcsname
\fi
#2}}
\providecommand{\BIBdecl}{\relax}
\BIBdecl

\bibitem{RAL_Xu_2023}
Z.~Xu, B.~Chen, X.~Zhan, Y.~Xiu, C.~Suzuki, and K.~Shimada, ``A vision-based
  autonomous {UAV} inspection framework for unknown tunnel construction sites
  with dynamic obstacles,'' \emph{IEEE Robotics and Automation Letters},
  vol.~8, no.~8, pp. 4983--4990, 2023.

\bibitem{TIM_Li_2023}
Y.~Li, J.~Wang, H.~Chen, X.~Jiang, and Y.~Liu, ``Object-aware view planning for
  autonomous {3-D} model reconstruction of buildings using a mobile robot,''
  \emph{IEEE Transactions on Instrumentation and Measurement}, vol.~72, pp.
  1--15, 2023.

\bibitem{TIM_LIU_2024}
Y.~Liu, Z.~Zhou, H.~Sang, S.~Yu, Y.~Yan, and M.~J. Er, ``Efficient exploration
  of mobile robot based on {DL-RRT} and {AP-BO},'' \emph{IEEE Transactions on
  Instrumentation and Measurement}, vol.~73, pp. 1--9, 2024.

\bibitem{TRO_Zhou_2023}
B.~Zhou, H.~Xu, and S.~Shen, ``{RACER}: Rapid collaborative exploration with a
  decentralized multi-{UAV} system,'' \emph{IEEE Transactions on Robotics},
  vol.~39, no.~3, pp. 1816--1835, 2023.

\bibitem{TIM_Zhang_2022}
S.~Zhang, X.~Zhang, T.~Li, J.~Yuan, and Y.~Fang, ``Fast active aerial
  exploration for traversable path finding of ground robots in unknown
  environments,'' \emph{IEEE Transactions on Instrumentation and Measurement},
  vol.~71, pp. 1--13, 2022.

\bibitem{Bircher_2018}
A.~Bircher, M.~Kamel, K.~Alexis, H.~Oleynikova, and R.~Siegwart, ``Receding
  horizon path planning for {3D} exploration and surface inspection,''
  \emph{Autonomous Robots}, vol.~42, no.~2, pp. 291--306, 2018.

\bibitem{Duberg_2022}
D.~Duberg and P.~Jensfelt, ``{UFOExplorer}: Fast and scalable sampling-based
  exploration with a graph-based planning structure,'' \emph{IEEE Robotics and
  Automation Letters}, vol.~7, no.~2, pp. 2487--2494, 2022.

\bibitem{Schmid_2020}
L.~Schmid, M.~Pantic, R.~Khanna, L.~Ott, R.~Siegwart, and J.~Nieto, ``An
  efficient sampling-based method for online informative path planning in
  unknown environments,'' \emph{IEEE Robotics and Automation Letters}, vol.~5,
  no.~2, pp. 1500--1507, 2020.

\bibitem{Xuetao_2021}
X.~{Zhang}, Y.~{Chu}, Y.~{Liu}, X.~{Zhang}, and Y.~{Zhuang}, ``A novel
  informative autonomous exploration strategy with uniform sampling for
  quadrotors,'' \emph{IEEE Transactions on Industrial Electronics}, vol.~69,
  no.~12, pp. 13\,131--13\,140, 2022.

\bibitem{yamauchi_1997}
B.~Yamauchi, ``A frontier-based approach for autonomous exploration,'' in
  \emph{Proceedings 1997 IEEE International Symposium on Computational
  Intelligence in Robotics and Automation}, 1997, pp. 146--151.

\bibitem{TIE_2024}
Y.~Zhao, L.~Yan, H.~Xie, J.~Dai, and P.~Wei, ``Autonomous exploration method
  for fast unknown environment mapping by using {UAV} equipped with limited
  {FOV} sensor,'' \emph{IEEE Transactions on Industrial Electronics}, vol.~71,
  no.~5, pp. 4933--4943, 2024.

\bibitem{RAL_2023}
J.~Yu, H.~Shen, J.~Xu, and T.~Zhang, ``{ECHO}: An efficient heuristic viewpoint
  determination method on frontier-based autonomous exploration for
  quadrotors,'' \emph{IEEE Robotics and Automation Letters}, vol.~8, no.~8, pp.
  5047--5054, 2023.

\bibitem{Zhou_2021}
B.~{Zhou}, Y.~{Zhang}, X.~{Chen}, and S.~{Shen}, ``{FUEL}: Fast {UAV}
  exploration using incremental frontier structure and hierarchical planning,''
  \emph{IEEE Robotics and Automation Letters}, vol.~6, no.~2, pp. 779--786,
  2021.

\bibitem{TII_2023}
X.~Zhang, X.~Xu, Y.~Liu, H.~Wang, X.~Zhang, and Y.~Zhuang, ``{FGIP}: A
  frontier-guided informative planner for {UAV} exploration and
  reconstruction,'' \emph{IEEE Transactions on Industrial Informatics},
  vol.~20, no.~4, pp. 6155--6166, 2024.

\bibitem{Selin_2019}
M.~Selin, M.~Tiger, D.~Duberg, F.~Heintz, and P.~Jensfelt, ``Efficient
  autonomous exploration planning of large-scale {3-D} environments,''
  \emph{IEEE Robotics and Automation Letters}, vol.~4, no.~2, pp. 1699--1706,
  2019.

\bibitem{Dai_2020}
A.~Dai, S.~Papatheodorou, N.~Funk, D.~Tzoumanikas, and S.~Leutenegger, ``Fast
  frontier-based information-driven autonomous exploration with an {MAV},'' in
  \emph{Proceedings 2020 IEEE International Conference on Robotics and
  Automation}, 2020, pp. 9570--9576.

\bibitem{TIM_chaoqun2020}
C.~Wang, H.~Ma, W.~Chen, L.~Liu, and M.~Q.-H. Meng, ``Efficient autonomous
  exploration with incrementally built topological map in {3-D} environments,''
  \emph{IEEE Transactions on Instrumentation and Measurement}, vol.~69, no.~12,
  pp. 9853--9865, 2020.

\bibitem{Zhu_2021}
H.~Zhu, C.~Cao, Y.~Xia, S.~Scherer, J.~Zhang, and W.~Wang, ``{DSVP}: Dual-stage
  viewpoint planner for rapid exploration by dynamic expansion,'' in
  \emph{Proceedings 2021 IEEE/RSJ International Conference on Intelligent
  Robots and Systems}, 2021, pp. 7623--7630.

\bibitem{Brunel_2021}
A.~Brunel, A.~Bourki, C.~Demonceaux, and O.~Strauss, ``Splatplanner: Efficient
  autonomous exploration via permutohedral frontier filtering,'' in
  \emph{Proceedings 2021 IEEE International Conference on Robotics and
  Automation}, 2021, pp. 608--615.

\bibitem{IJRR2014}
M.~Keidar and G.~A. Kaminka, ``Efficient frontier detection for robot
  exploration,'' \emph{The International Journal of Robotics Research},
  vol.~33, no.~2, pp. 215--236, 2014.

\bibitem{Quin_2021}
P.~Quin, D.~D.~K. Nguyen, T.~L. Vu, A.~Alempijevic, and G.~Paul, ``Approaches
  for efficiently detecting frontier cells in robotics exploration,''
  \emph{Frontiers in Robotics and AI}, vol.~8, 2021.

\bibitem{Quin_2014}
P.~Quin, A.~Alempijevic, G.~Paul, and D.~Liu, ``Expanding wavefront frontier
  detection: An approach for efficiently detecting frontier cells,'' in
  \emph{Proceedings 2014 Australasian Conference on Robotics and Automation,
  ACRA}, 2014.

\bibitem{Respall_2021}
V.~M. Respall, D.~Devitt, R.~Fedorenko, and A.~Klimchik, ``Fast sampling-based
  next-best-view exploration algorithm for a {MAV},'' in \emph{2021 IEEE
  International Conference on Robotics and Automation (ICRA)}, 2021, pp.
  89--95.

\bibitem{Connolly_1985}
C.~Connolly, ``The determination of next best views,'' in \emph{Proceedings
  1985 IEEE International Conference on Robotics and Automation}, vol.~2, 1985,
  pp. 432--435.

\bibitem{OctoMap_2013}
A.~Hornung, K.~M. Wurm, M.~Bennewitz, C.~Stachniss, and W.~Burgard,
  ``{OctoMap}: An efficient probabilistic {3D} mapping framework based on
  octrees,'' \emph{Autonomous Robots}, vol.~34, no.~3, pp. 189--206, 2013.

\bibitem{A_star_1968}
P.~E. Hart, N.~J. Nilsson, and B.~Raphael, ``A formal basis for the heuristic
  determination of minimum cost paths,'' \emph{IEEE Transactions on Systems
  Science and Cybernetics}, vol.~4, no.~2, pp. 100--107, 1968.

\bibitem{Janson_2018}
L.~Janson, B.~Ichter, and M.~Pavone, ``Deterministic sampling-based motion
  planning: Optimality, complexity, and performance,'' \emph{The International
  Journal of Robotics Research}, vol.~37, no.~1, pp. 46--61, 2018.

\bibitem{survey_2012}
M.~Juli{\'a}, A.~Gil, and O.~Reinoso, ``A comparison of path planning
  strategies for autonomous exploration and mapping of unknown environments,''
  \emph{Autonomous Robots}, vol.~33, no.~4, pp. 427--444, 2012.

\bibitem{TASE_Chaoqun_2019}
C.~Wang, W.~Chi, Y.~Sun, and M.~Q.-H. Meng, ``Autonomous robotic exploration by
  incremental road map construction,'' \emph{IEEE Transactions on Automation
  Science and Engineering}, vol.~16, no.~4, pp. 1720--1731, 2019.

\bibitem{TASE_haiming_2019}
H.~Gao, X.~Zhang, J.~Wen, J.~Yuan, and Y.~Fang, ``Autonomous indoor exploration
  via polygon map construction and graph-based {SLAM} using directional
  endpoint features,'' \emph{IEEE Transactions on Automation Science and
  Engineering}, vol.~16, no.~4, pp. 1531--1542, 2019.

\bibitem{Tianyi_2024}
T.~Li, S.~Zhang, X.~Zhang, Q.~Dong, and J.~Huang, ``{SGS-Planner}: A
  skeleton-guided spatiotemporal motion planner for flight in constrained
  space,'' \emph{IEEE/ASME Transactions on Mechatronics}, pp. 1--12, 2024.

\bibitem{kunz2012time}
T.~Kunz and M.~Stilman, ``Time-optimal trajectory generation for path following
  with bounded acceleration and velocity,'' \emph{Robotics: Science and Systems
  VIII}, pp. 1--8, 2012.

\bibitem{BFS2001}
T.~H. Cormen, C.~E. Leiserson, R.~L. Rivest, and C.~Stein, ``Elementary graph
  algorithms,'' in \emph{Introduction to Algorithms, 3rd Edition}, 2009.

\bibitem{gazebo}
http://gazebosim.org/.

\bibitem{Furrer2016}
F.~Furrer, M.~Burri, M.~Achtelik, and R.~Siegwart, ``Rotors---{A} modular
  gazebo mav simulator framework,'' in \emph{Robot Operating System (ROS): The
  Complete Reference (Volume 1)}.\hskip 1em plus 0.5em minus 0.4em\relax Cham:
  Springer, 2016, pp. 595--625.

\bibitem{he2023point}
D.~He, W.~Xu, N.~Chen, F.~Kong, C.~Yuan, and F.~Zhang, ``{Point-LIO:} robust
  high-bandwidth light detection and ranging inertial odometry,''
  \emph{Advanced Intelligent Systems}, vol.~5, no.~7, p. 2200459, 2023.

\end{thebibliography}

\end{document}